\theoremstyle{plain}
\newtheorem{theorem}{Theorem}[section]
\newtheorem{proposition}[theorem]{Proposition}
\newtheorem{lemma}[theorem]{Lemma}
\theoremstyle{definition}
\theoremstyle{remark}
\newcommand{\norm}[1]{\left\Vert#1\right\Vert}
\newcommand{\abs}[1]{\left\vert#1\right\vert}
\newcommand{\parr}[1]{\left (#1\right )}
\newcommand{\brac}[1]{\left [#1\right ]}
\newcommand{\Real}{\mathbb R}
\newcommand{\eps}{\varepsilon}
\newcommand{\too}{\rightarrow}
\newcommand{\divv}{\mathrm{div}} 
\definecolor{mygray}{gray}{0.95}
\newcommand{\eg}{{e.g.}}
\newcommand{\ie}{{i.e.}}
\def\eqref#1{equation~\ref{#1}}
\def\1{\bm{1}}
\def\eps{{\epsilon}}
\DeclareMathAlphabet{\mathsfit}{\encodingdefault}{\sfdefault}{m}{sl}
\SetMathAlphabet{\mathsfit}{bold}{\encodingdefault}{\sfdefault}{bx}{n}
\def\gF{{\mathcal{F}}}
\def\gL{{\mathcal{L}}}
\def\gN{{\mathcal{N}}}
\def\gO{{\mathcal{O}}}
\def\gR{{\mathcal{R}}}
\def\gT{{\mathcal{T}}}
\newcommand{\E}{\mathbb{E}}
\newcommand{\Var}{\mathrm{Var}}
\newcolumntype{C}[1]{>{\Centering}m{#1}}
\newcolumntype{Z}[1]{>{\Left}m{#1}}
\icmltitlerunning{D-Flow: Differentiating through Flows for Controlled Generation}
\begin{document}

\twocolumn[
\icmltitle{D-Flow: Differentiating through Flows for Controlled Generation}



\icmlsetsymbol{equal}{*}

\begin{icmlauthorlist}
\icmlauthor{Heli Ben-Hamu}{zzz,equal}
\icmlauthor{Omri Puny}{zzz}
\icmlauthor{Itai Gat}{yyy}
\icmlauthor{Brian Karrer}{yyy}
\icmlauthor{Uriel Singer}{yyy}
\icmlauthor{Yaron Lipman}{yyy,zzz}
\end{icmlauthorlist}

\icmlaffiliation{zzz}{Weizmann Institute of Science}
\icmlaffiliation{yyy}{Meta}

\icmlcorrespondingauthor{Heli Ben-Hamu}{heli.benhamu@weizmann.ac.il}

\icmlkeywords{Machine Learning, ICML}
\vskip 0.3in
]



\printAffiliationsAndNotice{\textsuperscript{*}Work done as a research intern at FAIR, Meta.}  

\begin{abstract}
Taming the generation outcome of state of the art Diffusion and Flow-Matching (FM) models without having to re-train a task-specific model unlocks a powerful tool for solving inverse problems, conditional generation, and controlled generation in general. In this work we introduce \emph{D-Flow}, a simple framework for controlling the generation process by differentiating through the flow, optimizing for the source (noise) point. We motivate this framework by our key observation stating that for Diffusion/FM models trained with Gaussian probability paths, differentiating through the generation process projects gradient on the data manifold, implicitly injecting the prior into the optimization process. We validate our framework on linear and non-linear controlled generation problems including: image and audio inverse problems and conditional molecule generation reaching state of the art performance across all. 
\end{abstract}

\section{Introduction}

Controlled generation from generative priors is of great interest in many domains. Various problems such as conditional generation, inverse problems, sample editing etc., can all be framed as a controlled generation problem. In this work we focus on controlled generation from diffusion/flow generative models \cite{song2019generative,ho2020denoising,lipman2023flow} as they are the current state-of-the-art generative approaches across different data modalities.

There are three main approaches for controlled generation from diffusion/flow models: (i) conditional  training, where the model receives the condition as an additional input during training \cite{song2020score,dhariwal2021diffusion,ho2022classifierfree}, although performing very well this approach requires task specific training of a generative model which in cases may be prohibitive; (ii) training-free approaches that modify the generation process of a pre-trained model, adding additional guidance \cite{bar2023multidiffusion,yu2023freedom}. The guidance is usually built upon strong assumptions on the generation process that can lead to errors in the generation and mostly limit the method to observations that are linear in the target \cite{kawar2022denoising,chung2022improving,song2023pseudoinverseguided,pokle2023trainingfree}; lastly, (iii) adopt a variational perspective, framing the controlled generation as an optimization problem \cite{graikos2023diffusion,mardani2023variational,wallace2023endtoend,samuel2023generating}, requiring only a differentiable cost to enforce the control. This paper belongs to this third class. 

\begin{figure}
  \begin{center}
  \begin{tabular}{@{\hspace{0pt}}c@{\hspace{3pt}}c@{\hspace{3pt}}c@{\hspace{8pt}}c@{\hspace{0pt}}}
       \includegraphics[width=0.22\columnwidth]{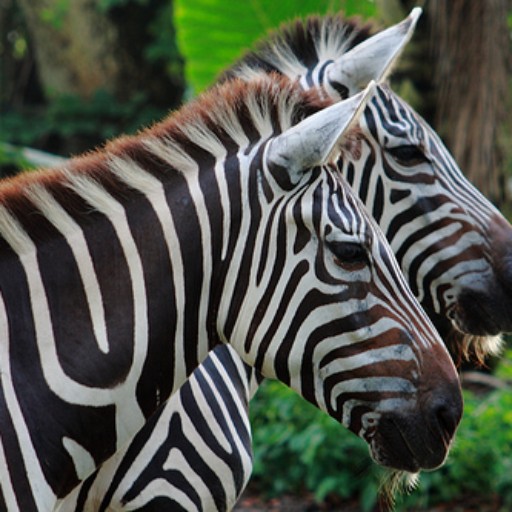} &  
       \includegraphics[width=0.22\columnwidth]{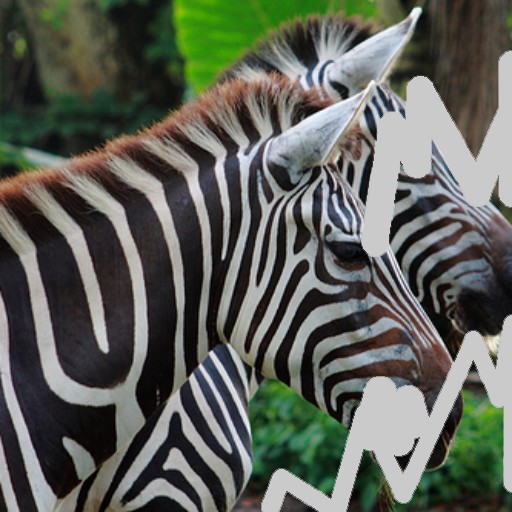} &
       \includegraphics[width=0.22\columnwidth]{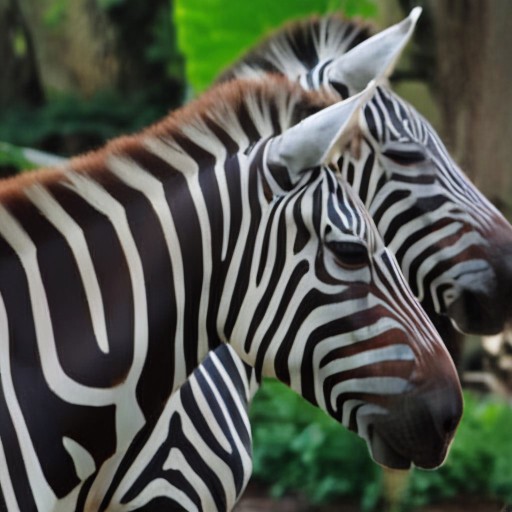} &
       \includegraphics[width=0.22\columnwidth]{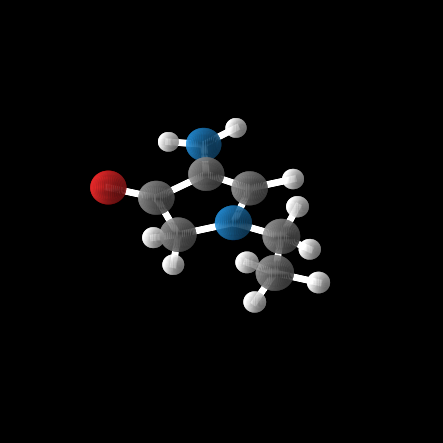}\\ 
       \scriptsize GT &  \scriptsize Distorted & \scriptsize Ours & \scriptsize $\alpha = 96.62$   
        
  \end{tabular} \vspace{10pt}
  
  \begin{tabular}{@{\hspace{0pt}}c@{\hspace{3pt}}c@{\hspace{3pt}}c@{\hspace{0pt}}}
         \includegraphics[trim={10cm 0pt 9.8cm 0pt},clip,width=0.31\columnwidth]{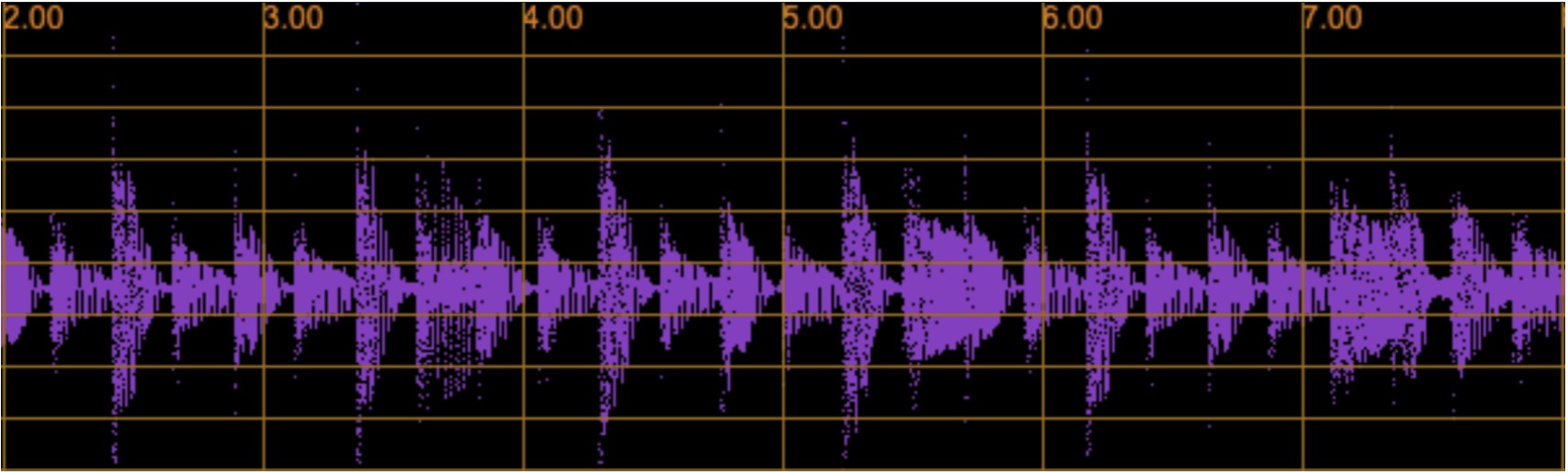} &
         \includegraphics[trim={10cm 0pt 9.8cm 0pt},clip,width=0.31\columnwidth]{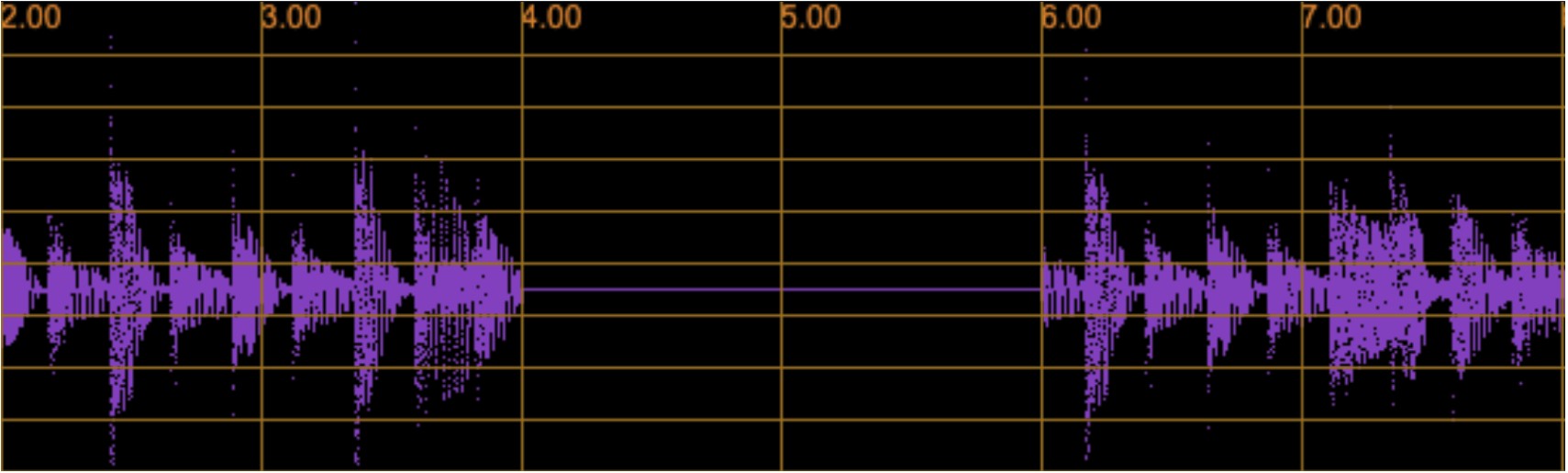} &
         \includegraphics[trim={10cm 0pt 9.8cm 0pt},clip,width=0.31\columnwidth]{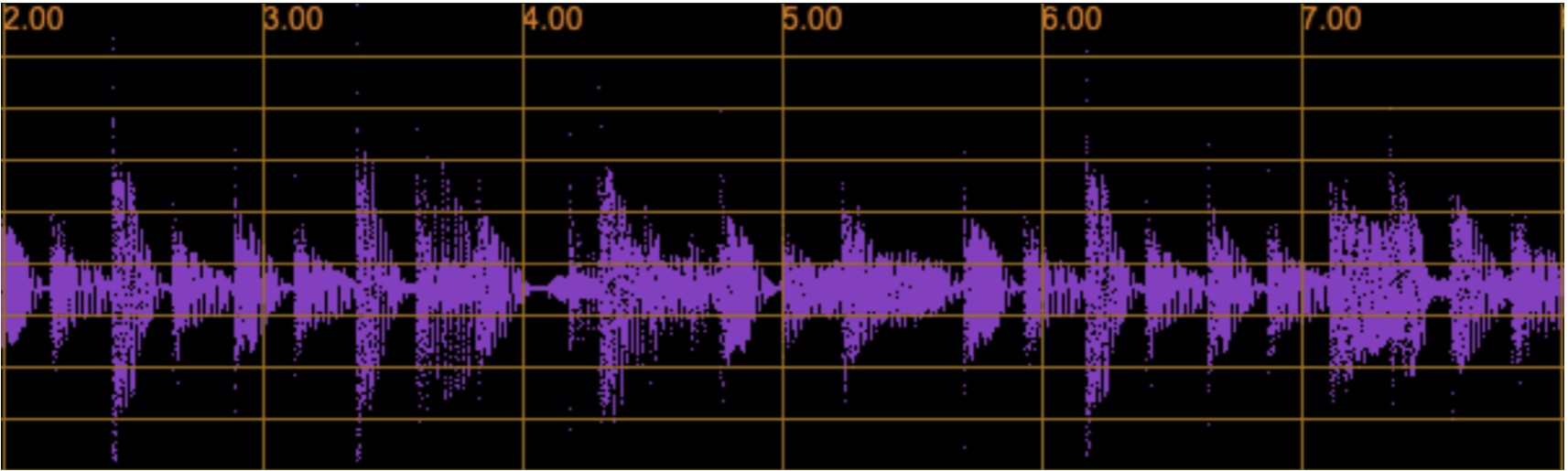}\\
       \scriptsize GT &  \scriptsize Distorted &\scriptsize  Ours
        
  \end{tabular}

  \end{center}
  \caption{Free-form inpainting with a latent T2I FM model (Ground truth image is taken from the MS-COCO validation set), conditionally generated molecule and audio inpainting using D-Flow.}\label{fig:brush_inpainting}
  \vspace{-10pt}
\end{figure}

The goal of this paper is to introduce a framework for adding controlled generation to a pre-trained Diffusion or Flow-Matching (FM) model based on \emph{differentiation through the ODE sampling process}. Our key observation is that for Diffusion/FM models trained with standard Gaussian probability paths, differentiating an arbitrary loss $\gL(x)$ through the generation process of $x$ with respect to the initial point, $x_0$, projects the gradient $\nabla_x\gL$ onto the ``data manifold'', \ie, onto major data directions at $x$, implicitly injecting a valuable prior. Based on this observation we advocate a simple general algorithm that minimizes an arbitrary cost function $\gL(x)$, representing the desired control, as a function of the source noise point $x_0$ used to generate $x$. That is,
\begin{equation}\label{e:basic}
    \min_{x_0}\ \gL(x).
\end{equation}

Differentiating through a generator of a GAN or a normalizing flow was proven generally useful for controlled generation \cite{bora2017compressed,asim2020invertible,whang2021solving} and counterfactual examples \cite{dombrowski2021diffeomorphic,dombrowski2024diffeomorphic_b}. Recently, \cite{wallace2023endtoend,samuel2023generating} have been suggesting to differentiate through a discrete diffusion  solver for the particular tasks of incorporating classifier guidance and generating rare concepts. In this paper we generalize this idea in two ways: (i) we consider general flow models trained with Gaussian probability paths, including Diffusion and Flow-Matching models; and (ii) we demonstrate, both theoretically and practically, that the inductive bias injected by differentiating through the flow is applicable to a much wider class of problems modeled by general cost functions. 

We experiment with our method on a variety of settings and applications: Inverse problems on images using conditional ImageNet and text-2-image (T2I) generative priors, conditional molecule generation with QM9 unconditional generative priors, and audio inpainting and super-resolution with unconditional generative prior. In all application we were able to achieve state of the art performance without carefully tuning the algorithm across domains and applications. One drawback of our method is the relative long time for generation (usually $5-15$ minutes on ImageNet-128 on an NVidia V100 GPU) compared to some baselines, however the method's simplicity and its superior results can justify its usage and adaptation in many use cases. Furthermore, we believe there is great room for speed improvement. 

To summarize, our contributions are:
\begin{itemize}
    \item We formulate the controlled generation problem as a simple source point optimization problem using general flow generative models.
    \item We show that source point optimization of flows trained with Gaussian probability paths inject an implicit bias exhibiting a data-manifold projection behaviour to the cost function's gradient. 
    \item We empirically show the generality and the effectiveness of the proposed approach for different domains. 
\end{itemize}

\begin{figure*}
  \begin{center}
  \begin{tabular}{@{\hspace{0pt}}c@{\hspace{4pt}}c@{\hspace{4pt}}c@{\hspace{4pt}}c@{\hspace{4pt}}c@{\hspace{4pt}}c@{\hspace{4pt}}c@{\hspace{4pt}}c@{\hspace{4pt}}c@{\hspace{0pt}}}
       \includegraphics[width=0.2\columnwidth]{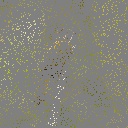} &  
       \includegraphics[width=0.2\columnwidth]{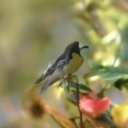} &
       \includegraphics[width=0.2\columnwidth]{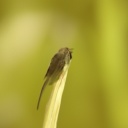} & 
       \includegraphics[width=0.2\columnwidth]{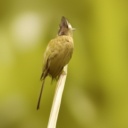} &
       \includegraphics[width=0.2\columnwidth]{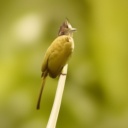} &
       \includegraphics[width=0.2\columnwidth]{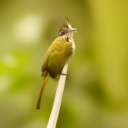} &
       \includegraphics[width=0.2\columnwidth]{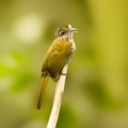} &
       \includegraphics[width=0.2\columnwidth]{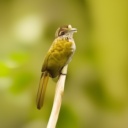} &
       \includegraphics[width=0.2\columnwidth]{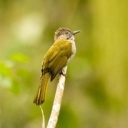} \\
       Distorted & Initial $x(1)$ & step 2 & step 4 &  step 6 &  step 8 & step 10 & step 12 & GT \vspace{-10pt}
  \end{tabular}    
  \end{center}
  \caption{Intermediate $x(1)$ during optimization. Given a distorted image and randomly initialized $x_0$ defining the initial $x(1)$, our optimization travels close to the natural image manifold passing through in-distribution images on its way to the GT sample from the face-blurred ImageNet-128 validation set.}\label{fig:implicit_bias_optimization}
\end{figure*}

\begin{algorithm}[t]
\caption{D-Flow.}\label{alg:main}
\begin{algorithmic}
\STATE {\bfseries Require:}  cost $\gL$, pre-trained flow model $u_t(x)$

\STATE Initialize $x_0^{(0)}=x_0$ 
\FOR{$i= 1, \dots, N$} 
    \STATE $x^{(i)}(1) \gets \texttt{solve}(x_0^{(i)},u_t)$ 
    \STATE $x_0^{(i+1)} \gets \texttt{optimize\_step}(x_0^{(i)},\nabla_{x_0}\gL(x^{(i)}(1)))$ 
\ENDFOR
\STATE {\bfseries Return:} $x^{N}(1)$
\end{algorithmic}
\end{algorithm}

\section{Preliminaries}
\textbf{Flow models. } Generative flow models, including Continuous Normalizing Flows (CNFs) \cite{chen2018neural,lipman2023flow} and (deterministic sampling of) Diffusion Models \cite{song2020score} generate samples $x(1)\in\Real^d$ by first sampling from some source (noise) distribution $x(0)\sim p_0(x_0)$ and then solving an Ordinary Differential Equation (ODE),
\begin{align}\label{e:dynamics}
    \dot{x}(t) = u_t(x(t)),
\end{align}
from time $t=0$ to time $t=1$\footnote{In this paper we use the convention of $t=0$ corresponds to noise, and $t=1$ to data.}, using a predetermined velocity field $u:[0,1]\times \Real^d\too\Real^d$. We denote by $p_1$ the distribution and density function of $x(1)$ given $x(0)\sim p_0(x_0)$.

\section{Controlled Generation via Source Point Optimization}

Given a pre-trained (frozen) flow model, $u_t(x)$, represented by a neural network and some cost function $\gL:\Real^d\too\Real_+$, our goal is to find likely samples $x$ that provide low cost $\gL(x)$ and are likely under the flow model's distribution $p_1$. We advocate a general framework formulating this problem as the following optimization problem 
\begin{equation}\label{e:method_inverse_sol}
    \min_{x_0} \quad \gL(x(1))     
\end{equation}
where in general $\gL$ can also incorporate multiple costs including potentially a regularization term that can depend on $x_0$ and $u$, 
\begin{equation}\label{e:L_with_R}
    \tilde{\gL}(x) = \gL(x) +  \gR(x_0, u). 
\end{equation}
In this formulation, the sample $x(1)$ is constrained to be a solution of the ODE \ref{e:dynamics} with initial boundary condition $x(0)=x_0$, where $x_0$ is the only optimized quantity and $\gL$ is the desired cost function. Optimizing \eqref{e:method_inverse_sol} is done by computing the gradients of the loss w.r.t.~the optimized variable $x_0$ as listed in Algorithm \ref{alg:main}. We call this method \emph{D-Flow}. To better understand the generality of this framework we next consider several instantiations of \eqref{e:method_inverse_sol}.

\subsection{Cost Functions}

\paragraph{Reversed sampling.} First, consider the simple case where $\gL(x)= \norm{x-y}^2$. In this case, the solution of \ref{e:method_inverse_sol} will be the $x_0$ that that has an ODE trajectory that reaches $y$ at $t=1$, \ie, $x(1)=y$. Note that since (under some mild assumptions on $u_t(x)$) \eqref{e:dynamics} defines a diffeomorphism $\Real^d\too\Real^d$, for an arbitrary $y\in\Real^d$, there exists a unique solution $x_0\in\Real^d$ to \eqref{e:method_inverse_sol}. 

\paragraph{Inverse problems.} In this case we have access to some known corruption function $H:\Real^d\too \Real^n$ and a corrupted sample from an unknown ground truth signal $x_*$,
\begin{equation} \label{e:inverse_problem}
    y = H(x_*) + \eps,
\end{equation}
where $\eps \sim \gN(\eps)$ is an optional additive noise. The goal is to recover an $x$ that produces $y$ and the cost function is usually
\begin{equation}
    \gL(x) = \norm{H(x)-y}^2,
\end{equation}
where the norm can be some arbitrary $L_p$ norm or even a general loss $\ell(H(x),y)$ comparing $H(x)$ and $y$. Specific choices of the corruption function $H$ can lead to common applications: \emph{Image inpainting} corresponds to choosing the corruption function $H$ to sub-sample known $n<d$ pixels out of $d$ total pixels; \emph{Image deblurring} corresponds to taking $H:\Real^d\too\Real^d$ to be a blurring function, \eg, a convolution with a blurring kernel; \emph{Super-resolution} corresponds to $H:\Real^d\too \Real^{d/k}$ lowering the dimension by a factor of $k$. 

\paragraph{Conditional sampling.} Another important application is to guide the sampling process to satisfy some conditioning $y$. In this case we can take $\gL(x)$ to encourage a classifier or some energy function to reach a particular class or energy $y$. For example, if $\gF:\Real^d\too\Real$ is some function and we would like to generate a sample from a certain level set $c\in\Real$ we can use the loss
\begin{equation}\label{eq:cond_sample}
    \gL(x) = \parr{\gF(x) - c}^2.
\end{equation}

\subsection{Initialization}
The initialization of $x_0$ can have a great impact on the convergence of the optimization of \eqref{e:method_inverse_sol}. A natural choice will be to initialize $x_0$ with a sample from the source distribution $p_0(x_0)$. We find that for cases when an observed signal $y$ provides a lot of information about the desired $x$, one can improve the convergence speed of the optimization. For example, in linear inverse problems on images, where the observed $y$ has a strong prior on the structure of the image, it is beneficial to initialize $x_0$ with a blend of a sample from the source distribution and the backward solution of the ODE from $t=1$ to $t=0$ of $y$:
\begin{equation}
    x_0 = \sqrt{\alpha}\cdot y(0) + \sqrt{1-\alpha}\cdot z,
\end{equation}
where $z\sim p_0(x_0)$ and $y(0)=y + \int_1^0 u(t,y(t))dt$.

\subsection{Regularizations}
The formulation in \eqref{e:method_inverse_sol} allows including different regularizations $\gR$ (\eqref{e:L_with_R}) discussed next. Maybe the most intriguing of these regularizations, and the main point of this paper, is the \emph{implicit regularization}, \ie, corresponding to $\gR\equiv 0$, discussed last in what follows. 

\paragraph{Regularizing the target $x(1)$.} Maybe the most natural is incorporating the negative log likelihood (NLL) of the sample $x(1)$, \ie, $\gR=-\log p_1(x(1))$ in \eqref{e:L_with_R}. This prior can be incorporated by augmenting $x(t)\in \Real^d$ with an extra coordinate $z\in \Real$ and formulate \eqref{e:method_inverse_sol} as
\begin{subequations}
\begin{align}
    \min_{x_0} & \quad \gL(x(1)) - z(1)\\ \label{e:log_p_dot_x}
    \text{s.t.} & \quad \dot{x}(t) = u_t(x(t)),\quad \quad \  \ x(0)=x_0 \\ \label{e:log_p_dot_z}
    & \quad \dot{z}(t) = -\divv\, u_t(x(t)), \ z(0)=\log p_0(x_0) 
\end{align}    
\end{subequations}
\begin{wrapfigure}{r}{0.45\columnwidth}
  \begin{center}
  \begin{tabular}{@{\hspace{0pt}}c@{\hspace{3pt}}c@{\hspace{0pt}}}
       \includegraphics[width=0.18\columnwidth]{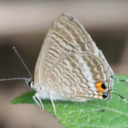} &  
       \includegraphics[width=0.18\columnwidth]{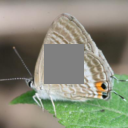} \\
       ${\scriptstyle \text{BPD}=\text{2.02}}$ & ${\scriptstyle \text{BPD}=\text{1.84}}$\vspace{-10pt}
  \end{tabular}    
  \end{center}
  \caption{BPD of two images in an ImageNet-128 model.}\label{fig:likelihoods}
\end{wrapfigure}
Indeed, solving the ODE system defined by equations \ref{e:log_p_dot_x} and \ref{e:log_p_dot_z} for times $t\in[0,1]$ provides $z(1)=\log p_1(x(1))$, see \cite{chen2018neural}. However, aside from the extra complexity introduced by the divergence term in the ODE in \eqref {e:log_p_dot_z} (see \eg, \cite{grathwohl2018ffjord} for ways to deal with this type of ODE) it is not clear whether likelihood is a good prior in deep generative models in high dimensions \cite{nalisnick2019deep}; In Figure \ref{fig:likelihoods} we compare bits-per-dimension (BPD) of a test image of ImageNet-128 and a version of this image with a middle square masked with zeros, providing a more likely image according to our flow model trained on ImageNet.  

\paragraph{Regularizing the source $x(0)=x_0$.} Another option is to regularize the source point $x(0)=x_0$. The first choice would again be to incorporate the NLL of the noise sample, \ie, $\gR=-\log p_0(x_0)$, which for standard noise $p_0(x_0)=\gN(x_0|0,I)$ would reduce to $\gR=c+\frac{1}{2}\norm{x_0}^2 $, where $c$ is a constant independent of $x_0$. This however, would attract $x_0$ towards the most likely all zero mean but far from most of the probability mass at norm $\sqrt{d}$. 

Following \cite{samuel2023normguided} we instead prefer to make sure $x_0$ stays in the area where most mass of $p_0$ is concentrated and therefore use the $\chi^d$ distribution, which is defined as the probability distribution $p(r)$ of the random variable $r =\norm{x_0}$ where $x_0\sim \gN(x_0|0,I)$ is again the standard normal distribution. The NLL of $r$ in this case is
\begin{equation}\label{e:chid_reg}
    \gR = -\log p(r) = c + (d-1)\log \norm{x_0} - \frac{\norm{x_0}^2}{2},
\end{equation}
where $c$ is a constant independent of $x_0$.

\paragraph{Implicit regularization.} Maybe the most interesting and potentially useful regularization in our formulation (\eqref{e:method_inverse_sol}) comes from the choice of optimizing the cost $\gL(x(1))$ as a function of the source point $x(0)=x_0$. For standard diffusion/flow models that are trained to zero loss:
\definecolor{mygray}{gray}{0.95}
\begin{center}\vspace{-17pt}			
    \colorbox{mygray} {		
      \begin{minipage}{0.977\linewidth} 	
Optimizing the cost $\gL(x(1))$ with respect to $x_0$ follows the data distribution $p_1(x_1)$ by projecting the gradient $\nabla_{x(1)}\gL(x(1))$ with the local data covariance matrix. 
      \end{minipage}}			
      \vspace{-1em}
\end{center}

\begin{wrapfigure}{r}{0.38\columnwidth}
  \begin{center}
  \includegraphics[width=0.38\columnwidth]{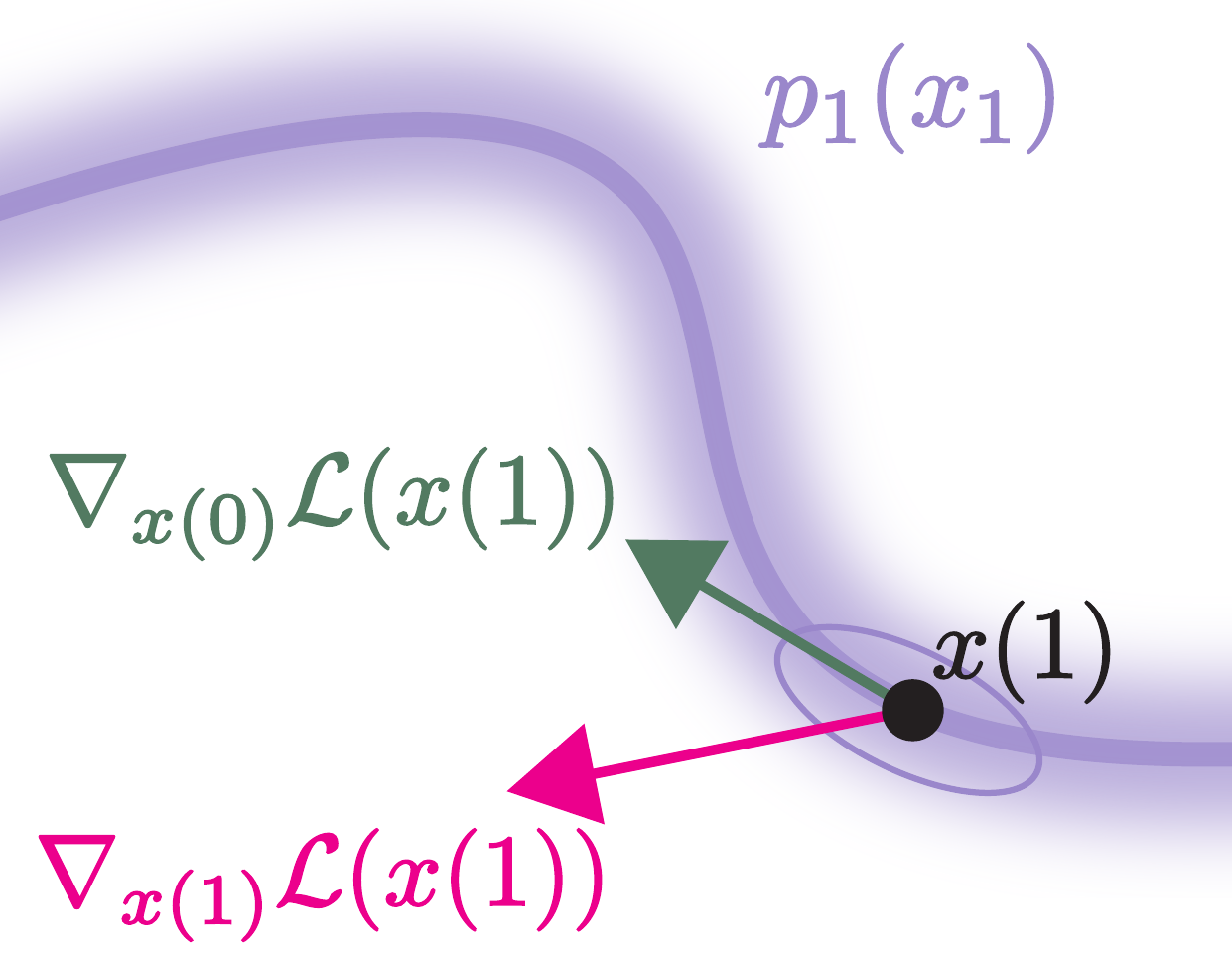} \vspace{-20pt}
  \end{center}
  \caption{Implicit bias in differentiating through the solver.}\label{fig:implicit_bias}
\end{wrapfigure} This is intuitively illustrated in Figure \ref{fig:implicit_bias}: while moving in direction of the gradient $\nabla_{x(1)}\gL(x(1))$ generally moves away from the data distribution (in pink), differentiating w.r.t.~$x(0)$ projects this gradient onto high variance data directions and consequently staying close to the data distribution. To exemplify this phenomena we show in Figure \ref{fig:implicit_bias_optimization} optimization steps $x^{(0)}(1),x^{(2)}(1),x^{(4)}(1),\ldots$ of a loss $\gL(x)=\norm{H(x)-H(x_*)}^2$, where $H$ is a linear matrix that subsamples a (random) subset of the image's pixels consisting of $90\%$ of the total number of pixels, and $x_*$ is a target image (different from the initial $x^{(0)}(1)$). The sampling process here is using an ImageNet trained flow model with the class condition `bulbul'. As can be seen in this sequence of images, the intermediate steps of the optimization stay close to the distribution and pass through different sub-species of the bulbul bird. In the next section we provide a precise mathematical statement supporting this claim but for now let us provide some intuitive explanation. 

\subsection{Practical Implementation}\label{s:practical_implementation}
The practical implementation of Algorithm \ref{alg:main} requires three algorithmic choices. First, one needs to decide how to initialize $x_0$. In all experiments we either initialize $x_0$ as a sample from the source distribution, \ie, normal Gaussian, or we use a variance preserving blend of a normal Gaussian with the backward solution from $t=1$ to $t=0$ of the observed signal when possible. Second, we need to choose the solver used to parameterize $x(1)$. To this end we utilize the \texttt{torchdiffeq} package \cite{torchdiffeq}, providing a wide class of differentiable ODE solvers. Backpropagating through the solver can be expensive in memory and we therefore use gradient checkpointing to reduce memory consumption at the cost of runtime. In most of our experiments we use the midpoint method with 6 function evaluations. Lastly, we need to choose the optimizer for the gradient step. Since the optimization we perform is not stochastic we choose to use the LBFGS algorithm with line search in all experiments.  The runtime of the optimization depends on the problem but typically ranges from $5-15$ minutes per sample. For large text-2-image and text-2-audio models run times are higher and can reach $30-40$ minutes. 

\section{Theory}
In this section we provide the theoretical support to the implicit regularization claim made in the previous section. 
 
First, we revisit the family of Affine Gaussian Probability Paths (AGPP) taking noise to data that are used to supervise diffusion/flow models. When diffusion/flow models reach zero loss they reproduce these probability paths and we will therefore use them to analyze the implicit bias. Second, we use the method of adjoint dynamics to provide an explicit formula for the gradient $\nabla_{x_0}\gL(x(1))$ under the AGPP assumption, and consequently derive the asymptotic change (velocity vector) in $x(1)$. Lastly, we interpret this velocity vector of $x(1)$ to demonstrate why it is pointing in the direction of the data distribution $p_1(x)$. 

\color{black}
\paragraph{Affine Gaussian probability paths.} 
Diffusion and recent flow based models use Affine Gaussian Probability Path (AGPP) to supervise their training. In particular, denoting $p_0=\gN(0,\sigma_0^2 I)$ the Gaussian noise (source) distribution and $p_1$ data (target) distribution, an AGPP is defined by 
\begin{equation}
    p_t(x) = \int p_t(x|x_1) p_1(x_1) dx_1,
\end{equation}
where $p_t(x|x_1)=\gN(x|\alpha_t x_1, \sigma_t^2 I)$ is a Gaussian kernel and $\alpha_t,\sigma_t:[0,1]\too [0,1]$ are called the \emph{scheduler}, satisfying $\alpha_0=0$, $\sigma_1\approx 0$, and $\alpha_1=1=\sigma_0$, consequently guaranteeing that $p_t$ interpolates (exactly or approximately) the source and target distributions at times $t=0$ and $t=1$, respectively. The velocity field that generates this probability path and coincide with the velocity field trained by diffusion/flow models at zero loss is \cite{lipman2023flow,shaul2023kinetic} 
\begin{equation}\label{e:u_t}
    u_t(x) = \int \brac{a_t x + b_t x_1} p_t(x_1|x) dx_1
\end{equation}
where using Bayes' Theorem 
\begin{equation}
p_t(x_1|x)=\frac{p_t(x|x_1)p_1(x_1)}{p_t(x)}, 
\end{equation}
and 
\begin{equation}\label{e:a_t_b_t}
a_t= \frac{\dot{\sigma}_t}{\sigma_t}, \quad    b_t = \dot{\alpha}_t - \alpha_t\frac{\dot{\sigma}_t}{\sigma_t}.    
\end{equation}

One can also simplify the integral in \eqref{e:u_t} and write the marginal vector field in terms of the denoiser \citep{karras2022elucidating}, $\hat{x}_{1|t}(x)=\int x_1 p_t(x_1|x)dx_1$:
\begin{equation}\label{e:u_t_denoiser}
    u_t(x) = a_t x + b_t \hat{x}_{1|t}(x) 
\end{equation}

which for AGPP possesses a useful property we will use in Theorem \ref{thm:D_x0_x1}, stated in the following proposition (proof in Appendix \ref{app:proof_prop_1}):

\begin{proposition}\label{prop:denoiser}
    For AGPP, the gradient of the denoiser $\hat{x}_{1|t}(x)$ w.r.t $x$ is proportional to the variance of the random variable defined by $p_t(x_1|x)$, formally:
    \begin{equation}\label{e:diff_denoiser_var}
    D_x\hat{x}_{1|t}(x) = \frac{\alpha_t}{\sigma_t^2} \Var_{1|t}(x)
\end{equation}
where 
\begin{equation}\label{e:var_definition}
\Var_{1|t}(x) = \E_{p_t(x_1|x)}\brac{x_1-\hat{x}_{1|t}(x)}\brac{x_1-\hat{x}_{1|t}(x)}^T
\end{equation}
\end{proposition}

\paragraph{Differentiating through the solver.} When diffusion/flow models are optimized to a minimal loss they perfectly reproduce the AGPP velocity field, \ie, \eqref{e:u_t} \cite{lipman2023flow}. For this velocity field we begin with an analysis of the differential of a solution (sample) $D_{x_0} x(1)$ for the continuous time exact case and a discrete time approximation. 

\begin{theorem}\label{thm:D_x0_x1}
    For AGPP velocity field $u_t$ (see \eqref{e:u_t}) and $x(t)$ defined via \eqref{e:dynamics} the differential of $x(1)$ as a function of $x_0$ is 
\begin{equation}\label{e:D_x0_x1}
        D_{x_0} x(1) =  \sigma_1 \gT\exp\brac{\int_0^1 \gamma_t \Var_{1|t}(x(t)) dt},
\end{equation}

where $\gT \exp[\cdot] $ stands for a time-ordered exponential, $\gamma_t=\frac{1}{2}\frac{d}{dt}\mathrm{snr}(t)$ and we define $\mathrm{snr}(t)=\frac{\alpha_t^2}{\sigma_t^2}$. 
\end{theorem}
 The proof is given in Appendix \ref{app:proof_thm_1}. In the exact case where $\sigma_1= 0$ we also have $\int_0^1\gamma_t dt=\infty$, nevertheless we show in Appendix \ref{app:proof_thm_1} that $D_{x_0}x(1)$ is the time-ordered exponential of a bounded time-dependent matrix. While a closed form expression to this integral is unknown, we note that the matrix-vector product $D_{x_0}x(1) v$ corresponds to an infinite sum of powers of the matrices $\gamma_t \Var_{1|t}(x(t))$ applied to $v$. 

To gain better intuition and align our theory with practice, where discrete ODE solvers are used to obtain $x(1)$, we will now analyze the discrete time solver case. Let us consider an Euler ODE solver with $N$ uniform steps of size $h=\frac{1}{N}$, then the differential of $x(1)$ as a function of $x_0$ is:
\begin{equation}\label{e:discrete_dx1_dx0}
    D_{x_0} x(1) = \prod_{m=0}^{N-1} \Big({(1+ha_{mh}) I + h\gamma_{mh}\Var_{1|mh}(x_{mh})}\Big)
\end{equation}

note that the product is a time-ordered product, with $m$ decreasing from right to left (derivation in Appendix \ref{app:discrete_time}). The form of \eqref{e:discrete_dx1_dx0}, consisting of powers of $\Var_{1|t}(x)$, provides insights as to why D-Flow works even with a low number of solver steps. Intuitively, the vector-matrix multiplication $\Var_{1|t}(x)v$ projects $v$ on the major axes of the distribution of the data conditioned on $x$. As we will soon see, $D_{x_0}x(1)$ is key to understanding the implicit bias claim.

\paragraph{The dynamics of $x(1)$.} Consider an optimization step updating the optimized variable $x_0$ with a gradient step, \ie, $x^\tau_0 = x_0 - \tau \nabla_{x_0}\gL(x(1))$, where the gradient $\nabla_{x_0}\gL(x(1))$ can be now computed with the chain rule and \eqref{e:D_x0_x1},
\begin{equation}\label{e:nabla_L_x0}
    \nabla_{x_0} \gL(x(1)) =  D_{x_0}x(1)^T \nabla_{x(1)} \gL(x(1)),
\end{equation}
We can now ask: \emph{How is the sample $x(1)$ changing infinitesimally under this gradient step?} Denote by $\Phi:\Real^d\too\Real^d$ the map taking initial conditions $x_0$ to solutions of \eqref{e:dynamics} at $t=1$, \ie, $\Phi(x_0)=x(1)$. The \emph{variation} of $x(1)$ is
\begin{align*}
    \delta x(1) &= \frac{d}{d\tau}\Big\vert_{\tau=0}\Phi\parr{x_0 - \tau \nabla_{x_0}\gL(x(1))} \\
    &= -\brac{D_{x_0}x(1) D_{x_0}x(1)^T}\nabla_{x(1)}\gL(x(1)),
\end{align*}
where the first equality is the definition of variation and the second equality is using chain rule and \eqref{e:nabla_L_x0}. Indeed, the dynamics of $x(1)$ follow the projection of the gradient $\nabla_{x(1)} \gL(x(1))$ with the operator $D_{x_0}x(1)$ that iteratively  applies projection by the covariance matrix $\Var_{1|t}(x(t))$ at different times $t$ (equations \ref{e:D_x0_x1} and \ref{e:discrete_dx1_dx0}).

\color{black}
\begin{figure*}
  \begin{center}
  \begin{tabular}{@{\hspace{0pt}}c@{\hspace{3pt}}c@{\hspace{3pt}}c@{\hspace{3pt}}c@{\hspace{3pt}}c@{\hspace{3pt}}c@{\hspace{7pt}}c@{\hspace{3pt}}c@{\hspace{3pt}}c@{\hspace{3pt}}c@{\hspace{3pt}}c@{\hspace{3pt}}c@{\hspace{0pt}}}
       \includegraphics[width=0.16\columnwidth]{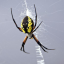} &  
       \includegraphics[width=0.16\columnwidth]{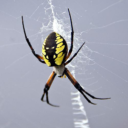} &
       \includegraphics[width=0.16\columnwidth]{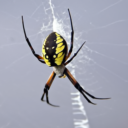} & 
       \includegraphics[width=0.16\columnwidth]{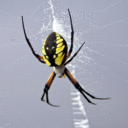} &
       \includegraphics[width=0.16\columnwidth]{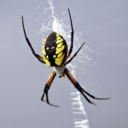} &
       \includegraphics[width=0.16\columnwidth]{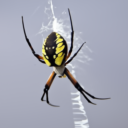} &
       \includegraphics[width=0.16\columnwidth]{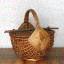} &  
       \includegraphics[width=0.16\columnwidth]{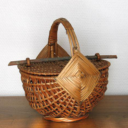} &
       \includegraphics[width=0.16\columnwidth]{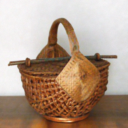} & 
       \includegraphics[width=0.16\columnwidth]{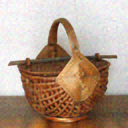} &
       \includegraphics[width=0.16\columnwidth]{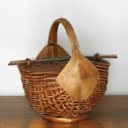} &
       \includegraphics[width=0.16\columnwidth]{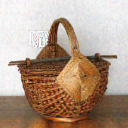}\\
       \includegraphics[width=0.16\columnwidth]{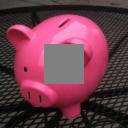} &  
       \includegraphics[width=0.16\columnwidth]{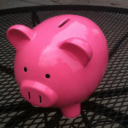} &
       \includegraphics[width=0.16\columnwidth]{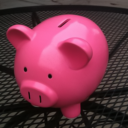} & 
       \includegraphics[width=0.16\columnwidth]{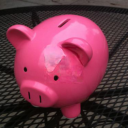} &
       \includegraphics[width=0.16\columnwidth]{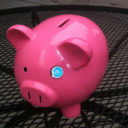} &
       \includegraphics[width=0.16\columnwidth]{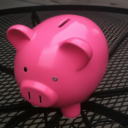} &
       \includegraphics[width=0.16\columnwidth]{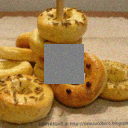} &  
       \includegraphics[width=0.16\columnwidth]{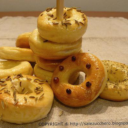} &
       \includegraphics[width=0.16\columnwidth]{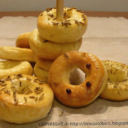} & 
       \includegraphics[width=0.16\columnwidth]{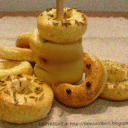} &
       \includegraphics[width=0.16\columnwidth]{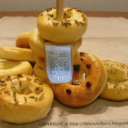} &
       \includegraphics[width=0.16\columnwidth]{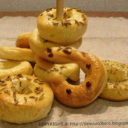}\\
       \includegraphics[width=0.16\columnwidth]{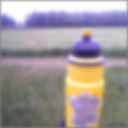} &  
       \includegraphics[width=0.16\columnwidth]{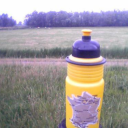} &
       \includegraphics[width=0.16\columnwidth]{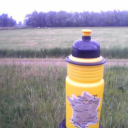} & 
       \includegraphics[width=0.16\columnwidth]{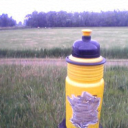} &
       \includegraphics[width=0.16\columnwidth]{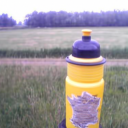} &
       \includegraphics[width=0.16\columnwidth]{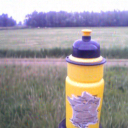} &
       \includegraphics[width=0.16\columnwidth]{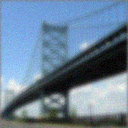} &  
       \includegraphics[width=0.16\columnwidth]{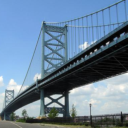} &
       \includegraphics[width=0.16\columnwidth]{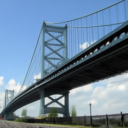} & 
       \includegraphics[width=0.16\columnwidth]{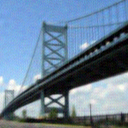} &
       \includegraphics[width=0.16\columnwidth]{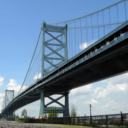} &
       \includegraphics[width=0.16\columnwidth]{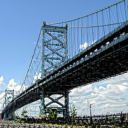}\\
       \scriptsize Distorted & \scriptsize Ground Truth & \scriptsize Ours & \scriptsize RED-Diff & \scriptsize  OT-ODE & \scriptsize $\Pi$GDM & \scriptsize Distorted & \scriptsize Ground Truth & \scriptsize Ours & \scriptsize RED-Diff & \scriptsize  OT-ODE & \scriptsize $\Pi$GDM  
  \end{tabular}    
  \end{center}
  \caption{Qualitative comparison for linear inverse problems on ImageNet-128. GT samples from ImageNet-128 validation.}\label{fig:linear_inv_imagenet}
\end{figure*}

\section{Related Work}

\textbf{Inverse Problems. } A new line of works alter the diffusion generation process for training-free solutions of inverse problems. Most works can be viewed as building guidance strategies to the generation process of diffusion models. \cite{kawar2022denoising} takes a variational approach deriving a solver for linear inverse problems. Similarly, \cite{chung2022improving,wang2022zeroshot} modify the generation process by enforcing consistency with the observations either via cost functions or projections \cite{choi2021ilvr,wang2022zeroshot,lugmayr2022repaint}. Other approaches guide the sampling process with derivatives through the diffusion model at each denoising step~\cite{ho2022video,chung2023diffusion,song2023pseudoinverseguided,pokle2023trainingfree}. A recent work by \cite{rout2023solving} extends the ideas for latent diffusion models by chained applications of encoder-decoder. Similar to our approach \cite{mardani2023variational} performs optimization of a reconstruction loss with score matching regularization.

\begin{table*}[t]
\caption{ Quantitative evaluation of linear inverse problems on face-blurred ImageNet-128.}
\resizebox{1.0\linewidth}{!}{%
\begin{tabular}{llcccccccccccccc}
\hline
                &  & \multicolumn{4}{c}{\textbf{Inpainting-Center}}                                                                                 &  & \multicolumn{4}{c}{\textbf{Super-Resolution X2}}                                                                                                                                                               & \multicolumn{1}{c}{} & \multicolumn{4}{c}{\textbf{Gaussian deblur}}                                                                                                                                                                   \\ \cline{3-6} \cline{8-11} \cline{13-16} 
Method          &  & FID $\downarrow$              & LPIPS $\downarrow$            & PSNR $\uparrow$                & SSIM $\uparrow$               &  & \multicolumn{1}{c}{FID $\downarrow$}              & \multicolumn{1}{c}{LPIPS $\downarrow$}            & \multicolumn{1}{c}{PSNR $\uparrow$}                & \multicolumn{1}{c}{SSIM $\uparrow$}               & \multicolumn{1}{c}{} & \multicolumn{1}{c}{FID $\downarrow$}              & \multicolumn{1}{c}{LPIPS $\downarrow$}            & \multicolumn{1}{c}{PSNR $\uparrow$}                & \multicolumn{1}{c}{SSIM $\uparrow$}               \\ \cline{1-1} \cline{3-6} \cline{8-11} \cline{13-16} 
$\sigma_y=0$    &  & \multicolumn{1}{l}{}          & \multicolumn{1}{l}{}          & \multicolumn{1}{l}{}           & \multicolumn{1}{l}{}          &  &                                                   &                                                   &                                                    &                                                   &                      &                                                   &                                                   &                                                    &                                                   \\
\;\;$\Pi$GDM \tiny{\cite{song2023pseudoinverseguided}}        &  & 5.73                        & 0.096                         & 36.89                          & 0.908                         &  & \multicolumn{1}{c}{6.01}                          & \multicolumn{1}{c}{0.104}                         & \multicolumn{1}{c}{\cellcolor[HTML]{E8E8E8}34.31}                          & \multicolumn{1}{c}{\cellcolor[HTML]{E8E8E8}0.911}                         &                      & \multicolumn{1}{c}{4.27}                          & \multicolumn{1}{c}{0.066}                         & \multicolumn{1}{c}{\cellcolor[HTML]{E8E8E8}37.61}  & \multicolumn{1}{c}{\cellcolor[HTML]{E8E8E8}0.961} \\
\;\;OT-ODE \tiny{\cite{pokle2023trainingfree}}          &  & 5.65                         & 0.094                         & 37.00                         & 0.893                         &  & \multicolumn{1}{c}{4.28}                          & \multicolumn{1}{c}{0.097}                         & \multicolumn{1}{c}{33.88}                          & \multicolumn{1}{c}{0.903}                         &                      & \multicolumn{1}{c}{\cellcolor[HTML]{E8E8E8}2.04}  & \multicolumn{1}{c}{\cellcolor[HTML]{E8E8E8}0.048} & \multicolumn{1}{c}{37.44}                          & \multicolumn{1}{c}{0.959} \\
\;\;RED-Diff \tiny{\cite{mardani2023variational}}         &  & \cellcolor[HTML]{E8E8E8}5.40 & \cellcolor[HTML]{90C4C7}0.068 & \cellcolor[HTML]{90C4C7}38.91 & \cellcolor[HTML]{90C4C7}0.928 &  & \multicolumn{1}{c}{\cellcolor[HTML]{E8E8E8}3.05}  & \multicolumn{1}{c}{\cellcolor[HTML]{E8E8E8}0.091} & \multicolumn{1}{c}{33.74}  & \multicolumn{1}{c}{0.900} &                      & \multicolumn{1}{c}{\cellcolor[HTML]{90C4C7}1.62}                          & \multicolumn{1}{c}{0.055}                         & \multicolumn{1}{c}{35.18}                          & \multicolumn{1}{c}{0.937}                         \\
\;\;Ours            &  & \cellcolor[HTML]{90C4C7}4.14 & \cellcolor[HTML]{E8E8E8}0.072 & \cellcolor[HTML]{E8E8E8}37.67 & \cellcolor[HTML]{E8E8E8}0.922 &  & \multicolumn{1}{c}{\cellcolor[HTML]{90C4C7}2.50} & \multicolumn{1}{c}{\cellcolor[HTML]{90C4C7}0.069} & \multicolumn{1}{c}{\cellcolor[HTML]{90C4C7}34.88} & \multicolumn{1}{c}{\cellcolor[HTML]{90C4C7}0.924} &                      & \multicolumn{1}{c}{2.37} & \multicolumn{1}{c}{\cellcolor[HTML]{90C4C7}0.035} & \multicolumn{1}{c}{\cellcolor[HTML]{90C4C7}39.47} & \multicolumn{1}{c}{\cellcolor[HTML]{90C4C7}0.976} \\ \cline{1-1} \cline{3-16} 
$\sigma_y=0.05$ &  & \multicolumn{1}{l}{}          & \multicolumn{1}{l}{}          & \multicolumn{1}{l}{}           & \multicolumn{1}{l}{}          &  &                                                   &                                                   &                                                    &                                                   &                      &                                                   &                                                   &                                                    &                                                   \\
\;\;$\Pi$GDM \tiny{\cite{song2023pseudoinverseguided}}        &  & 7.99                          & 0.122                         & 34.57                          & 0.867                         &  & \cellcolor[HTML]{E8E8E8}4.38                      & \cellcolor[HTML]{E8E8E8}0.148                     & 32.07                                              & 0.831                                             &                      & 30.30                                             & 0.328                                             & 29.96                                              & 0.606                                             \\
\;\;OT-ODE \tiny{\cite{pokle2023trainingfree}}           &  & \cellcolor[HTML]{E8E8E8}6.25  & \cellcolor[HTML]{E8E8E8}0.119 & \cellcolor[HTML]{90C4C7}35.01  & \cellcolor[HTML]{E8E8E8}0.882 &  & 4.61                     & 0.149                                             & \cellcolor[HTML]{90C4C7}32.59                     & \cellcolor[HTML]{90C4C7}0.862                     &                      & \cellcolor[HTML]{90C4C7}4.84 & \cellcolor[HTML]{E8E8E8}0.175 & \cellcolor[HTML]{E8E8E8}31.94 & \cellcolor[HTML]{90C4C7}0.821                                       \\
\;\;RED-Diff \tiny{\cite{mardani2023variational}}       &  & 14.63                         & 0.171                         & 32.42                          & 0.820                         &  & 10.54                                             & 0.182                                             & 31.82                                              & 0.852                                             &                      & 21.43                                             & 0.229                                             & 31.41                                              & 0.807                                             \\
\;\;Ours            &  & \cellcolor[HTML]{90C4C7}4.76 & \cellcolor[HTML]{90C4C7}0.102 & \cellcolor[HTML]{E8E8E8}34.609 & \cellcolor[HTML]{90C4C7}0.890 &  & \cellcolor[HTML]{90C4C7}4.26                                             & \cellcolor[HTML]{90C4C7}0.146                     & \cellcolor[HTML]{E8E8E8}32.35                    & \cellcolor[HTML]{E8E8E8}0.858                    &                      & \cellcolor[HTML]{E8E8E8}5.35 & \cellcolor[HTML]{90C4C7}0.167 & \cellcolor[HTML]{90C4C7}31.99 & \cellcolor[HTML]{E8E8E8}0.820                                               
\end{tabular}
}\vspace{10pt}
\label{tab:lin_inv}
\end{table*}

\textbf{Conditional sampling. } Conditional sampling from diffusion models can be achieved by training an additional noise-aware condition predictor model~\cite{song2020score} or by incorporating the condition into the training process \cite{dhariwal2021diffusion,ho2022classifier}. These approaches however require task specific training. Plug-and-play approaches, on the other hand, utilize a pre-trained unconditional generative model as a prior. \cite{graikos2023diffusion} perform constrained generation via optimization of a reconstruction term regularized by the diffusion loss. \cite{Liu_2023_CVPR} seeks for optimal control optimizing through the generation process to learn guiding controls. 
\color{black}
Our method formulates a similar optimization problem like earlier works on GANs \cite{bora2017compressed} and normalizing flows \cite{asim2020invertible, dombrowski2021diffeomorphic,chávez2022generative}. While \cite{asim2020invertible} provides an analysis of a simplified linear model, \cite{dombrowski2021diffeomorphic} analyzes the manifold preserving properties of diffeomorphic generative models. Our work provides a novel theoretical analysis of the gradient of differentiable functionals with respect to initial values of diffusion/flow generative processes with affine Gaussian paths. Our analysis unravels a fresh perspective on the implicit regularization implemented by differentiating through the generation process, even with a few number of steps (Appendix \ref{app:discrete_time}), that aligns with the denoising attributes of diffusion/flow models. We note that using gradients through the solver for the case of discrete diffusion models was first used by \cite{wallace2023endtoend} for classifier guidance and by \cite{samuel2023generating} to generate rare samples. 
\color{black}

\begin{table}[]
\caption{ Quantitative evaluation of free-form inpainting on MS-COCO with T2I latent model.\vspace{-8pt}}
\resizebox{1.0\linewidth}{!}{%
\begin{tabular}{llccccc}
\hline
                &  & \multicolumn{5}{c}{\textbf{Inpainting-Free-Form}} \\ \cline{3-7} 
Method          &  & FID $\downarrow$              & LPIPS $\downarrow$            & PSNR $\uparrow$                & SSIM $\uparrow$  & Clip score $\uparrow$                    \\ \cline{1-1} \cline{3-7} \vspace{1pt}

\;\;RED-Diff \tiny{\cite{mardani2023variational}}      &   & \cellcolor[HTML]{E8E8E8}23.31 & \cellcolor[HTML]{90C4C7}0.327 & \cellcolor[HTML]{90C4C7}33.28 & \cellcolor[HTML]{90C4C7}0.813 & \cellcolor[HTML]{E8E8E8}0.882   \\
\;\;Ours            &  & \cellcolor[HTML]{90C4C7}16.92 & \cellcolor[HTML]{90C4C7}0.327 & \cellcolor[HTML]{E8E8E8}32.34 & \cellcolor[HTML]{E8E8E8}0.759 & \cellcolor[HTML]{90C4C7}0.892                                                                               
\end{tabular}
}
\label{tab:latent_inv}
\end{table}

\section{Experiments}
We test D-Flow on the tasks: linear inverse problems on images, inverse problems with latent flow models and conditional molecule generation. For all the inverse problems experiments, where the observed signal provides structural information, we use a blend initialization to our algorithm speeding up convergence and often improving performance. Furthermore, in most experiments we find that there is no need in adding an explicit regularizing term in the optimization. The only cases where we found regularization helpful was in the noisy case for linear inverse problems and molecule generation. Additional details are in Appendix \ref{app:implementation}.

\subsection{Linear Inverse Problems on Images}
We validate our method on standard linear inverse problems with a known degradation model on images. The tasks we consider are center-crop inpainting, super-resolution and Gaussian deblurring both in the noiseless and noisy case. In all cases we stop the optimization at a task dependent target PSNR. For the noisy case we choose the target PSNR to be the PSNR corresponding to the known added noise.

\textbf{Tasks. } We follow the same settings as in \cite{pokle2023trainingfree}: (i) For center-crop inpainting, we use a $40\times 40$ centered mask; (ii) for super-resolution we use bicubic interpolation to downsample the images by $\times 2$; and lastly (iii) for Gaussian deblur we apply a Gaussian blur kernel of size $61\times61$ with intensity $1$. For each task we report results for the noiseless and noisy (Gaussian noise of $\sigma_y=0.05$, see \eqref{e:inverse_problem}) cases. Further implementation details can be found in the Appendix \ref{app:implementation_linear}.

\textbf{Metrics. } Following the evaluation protocol of prior works \cite{chung2022improving,kawar2022denoising} we report Fr\'echet Inception Distance (FID) \cite{heusel2018gans},  Learned Perceptual Image Patch Similarity (LPIPS) \cite{zhang2018unreasonable}, peak signal-to-noise ratio (PSNR), and structural similarity index (SSIM). 

\textbf{Datasets and baselines. } We use the face-blurred ImageNet-128 dataset and report our results on the $10k$ split of the face-blurred ImageNet dataset used by \cite{pokle2023trainingfree}. We compare our method to three recent state of the art methods: $\Pi$GDM \cite{song2023pseudoinverseguided}, OT-ODE \cite{pokle2023trainingfree} and RED-Diff \cite{mardani2023variational}. We use the implementation of \cite{pokle2023trainingfree} for all the baselines. All methods, including ours, are evaluated with the same Cond-OT flow-matching class conditioned model trained on the face-blurred ImageNet-128 unless the reults we produced were inferior to the ones reported in \cite{pokle2023trainingfree}. In that case, we use the reported numbers from \cite{pokle2023trainingfree}.

\begin{table*}[t]
\caption{Quantitative evaluation of music generation with latent flow models.\vspace{-8pt}}
\resizebox{1.0\linewidth}{!}{%
\begin{tabular}{lcccccccccccccccc}
                \toprule
                & \multicolumn{1}{c}{} & \multicolumn{2}{c}{\textbf{Inpainting (10\%)}} &  \multicolumn{1}{c}{} & \multicolumn{2}{c}{\textbf{Inpainting (20\%)}} & \multicolumn{1}{c}{} & \multicolumn{2}{c}{\textbf{Super-Resolution X2}} & \multicolumn{1}{c}{} & \multicolumn{2}{c}{\textbf{Super-Resolution X4}} & \multicolumn{1}{c}{} & \multicolumn{2}{c}{\textbf{Super-Resolution X8}}
                \\
                Method & & FAD $\downarrow$ & PSNR $\uparrow$ & & FAD $\downarrow$ &  PSNR $\uparrow$  & &FAD $\downarrow$ &  PSNR $\uparrow$  & &FAD $\downarrow$ &  PSNR $\uparrow$  & & FAD $\downarrow$ &  PSNR $\uparrow$ \\
                \cmidrule{1-1} \cmidrule{3-4} \cmidrule{6-7} \cmidrule{9-10} \cmidrule{12-13} \cmidrule{15-16}
                In-domain\\
                \;\;RED-Diff \tiny{\cite{mardani2023variational}} &&\cellcolor[HTML]{E8E8E8}0.75&\cellcolor[HTML]{90C4C7}31.19&&\cellcolor[HTML]{E8E8E8}0.78& \cellcolor[HTML]{90C4C7}29.99&&\cellcolor[HTML]{E8E8E8}0.93&\cellcolor[HTML]{E8E8E8}35.27&&\cellcolor[HTML]{E8E8E8}1.63&\cellcolor[HTML]{E8E8E8}33.51&&\cellcolor[HTML]{E8E8E8}1.73&\cellcolor[HTML]{E8E8E8}29.12\\
                \;\;Ours &&\cellcolor[HTML]{90C4C7}0.22&\cellcolor[HTML]{E8E8E8}31.02&&\cellcolor[HTML]{90C4C7}0.49 & \cellcolor[HTML]{E8E8E8}29.57&&\cellcolor[HTML]{90C4C7}0.22&\cellcolor[HTML]{90C4C7}44.51&&\cellcolor[HTML]{90C4C7}0.50&\cellcolor[HTML]{90C4C7}42.64&&\cellcolor[HTML]{90C4C7}1.01&\cellcolor[HTML]{90C4C7}36.50\\
                \midrule
                MusicCaps\\
                \;\;RED-Diff \tiny{\cite{mardani2023variational}} &&\cellcolor[HTML]{E8E8E8}3.59&\cellcolor[HTML]{90C4C7} 32.81&&\cellcolor[HTML]{E8E8E8}3.72& \cellcolor[HTML]{E8E8E8}30.39&&\cellcolor[HTML]{E8E8E8}3.07 &\cellcolor[HTML]{E8E8E8}37.13&&\cellcolor[HTML]{E8E8E8}3.51&\cellcolor[HTML]{E8E8E8}34.99&&\cellcolor[HTML]{E8E8E8}3.97&\cellcolor[HTML]{E8E8E8}30.49\\
                \;\;Ours &&\cellcolor[HTML]{90C4C7} 1.19&\cellcolor[HTML]{E8E8E8}31.78&&\cellcolor[HTML]{90C4C7} 1.31& \cellcolor[HTML]{90C4C7} 31.08&&\cellcolor[HTML]{90C4C7}1.25&\cellcolor[HTML]{90C4C7}38.93&&\cellcolor[HTML]{90C4C7}1.42&\cellcolor[HTML]{90C4C7}35.83&&\cellcolor[HTML]{90C4C7}2.09&\cellcolor[HTML]{90C4C7}32.20\\
\bottomrule 
\end{tabular}
}
\label{tab:audio_inv}
\end{table*}

\textbf{Results. } As shown in Table \ref{tab:lin_inv}, our method shows strong performance across all tasks, Figure \ref{fig:linear_inv_imagenet} shows samples for each type of distortion. For inpainting and super-resolution our method improves upon state of the art in most metrics. We believe that our method's ability to reach images with higher fidelity to the ground truth is attributed to the source point optimization, which, differently from guided sampling approaches such as \citep{song2023pseudoinverseguided,pokle2023trainingfree}, iteratively correct the sampling trajectory to better match the observed signal. We further note that compared to RED-Diff, which is also an optimization approach, our method does not struggle in the noisy case and achieves SOTA performance. We show more samples in Figures \ref{afig:linear_inv_imagenet},\ref{afig:linear_inv_imagenet_2}.

\subsection{Inverse Problems with Latent Flow Models}

\subsubsection{Image Inpainting}
We demonstrate the capability of our approach for non-linear inverse problems by applying it to the task of free-form inpainting using a latent T2I FM model. 

\textbf{Metrics.} To quantitatively assess our results we report standard metrics used in T2I generation: PSNR, FID \cite{heusel2018gans}, and Clip score \cite{ramesh2022hierarchical}.

\textbf{Datasets and baselines. } The T2I model we use was trained on a proprietary dataset of $330m$ image-text pairs. It was trained on the latent space of an autoencoder as in \cite{rombach2022highresolution}. The architecture is based on GLIDE \cite{nichol2022glide} and uses a T5 text encoder \cite{raffel2023exploring}. We evaluate on a subset of $1k$ samples from the validation set of the COCO dataset \cite{lin2015microsoft}. We compare our method to RED-Diff \cite{mardani2023variational} as it is also not limited to linear inverse problems like the other baselines we used in the previous section. We tested different hyper-parameters for RED-Diff and report results with the best. 

\textbf{Results.} Table \ref{tab:latent_inv} reports metrics for the baseline and our method. The metrics indicate that while RED-Diff better matches the unmasked areas, achieving superior performance for structural metrics (PSNR, SSIM) our method produces more semantically plausible image completion winning in perceptual metrics. We do observe that RED-Diff often produces artifacts for this task. Results are visualized in Figure \ref{afig:inp_latent_image}.

\subsubsection{Audio Inpainting and Super-Resolution}

We evaluate our method on the tasks of music inpainting and super-resolution, utilizing a latent flow-matching music generation model. For this, we used a trained Cond-OT flow-matching text conditioned model with a transformer architecture of 325m parameters that operates on top of EnCodec representation~\cite{defossez2022highfi}. The model's performance aligns with the current state-of-the-art scores in text-conditional music generation, achieving  a Fr\'echet Audio Distance (FAD) score of $3.13$~\citep{kilgour2018fr} on MusicCaps and FAD of $0.72$ on in-domain data. The model is trained to generate ten-seconds samples. In the following, we evaluate the performance of inpainting and super-resolution using our method and RED-Diff as baseline, we report FAD and PSNR metrics.

\textbf{Datasets and baselines.} 
For evaluation, we use the MusicCaps benchmark, which comprises of $5.5$K pairs of music and a textual description and an internal (in-domain) evaluation set of $202$ samples, similar to~\citep{copet2023simple,ziv2024masked}. Similar to prior work, we compute FAD metric using VGGish. We compare our method to RED-Diff~\citep{mardani2023variational}.

\begin{figure*}[t]
  \begin{center}
  \begin{tabular}{@{\hspace{0pt}}c@{\hspace{3pt}}c@{\hspace{0pt}}@{\hspace{3pt}}c@{\hspace{0pt}}@{\hspace{3pt}}c@{\hspace{0pt}}@{\hspace{3pt}}c@{\hspace{0pt}}@{\hspace{3pt}}c@{\hspace{0pt}}@{\hspace{3pt}}c@{\hspace{0pt}}@{\hspace{3pt}}c@{\hspace{0pt}}}
       \includegraphics[width=0.22\columnwidth]{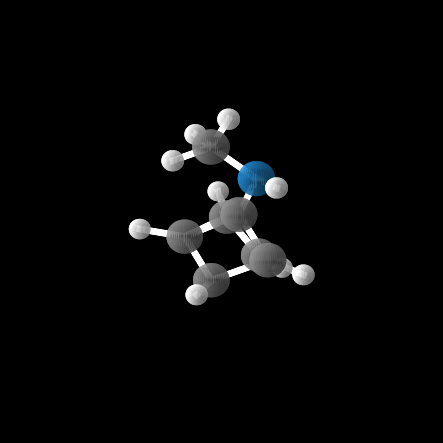} &  
       \includegraphics[width=0.22\columnwidth]{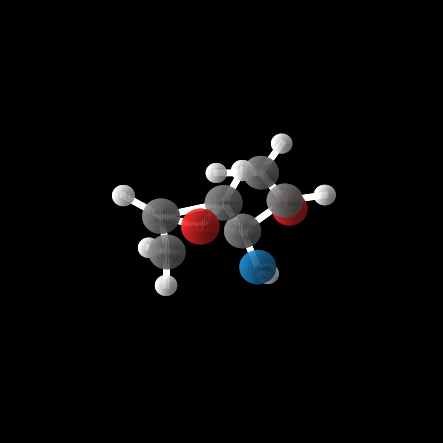} &  
       \includegraphics[width=0.22\columnwidth]{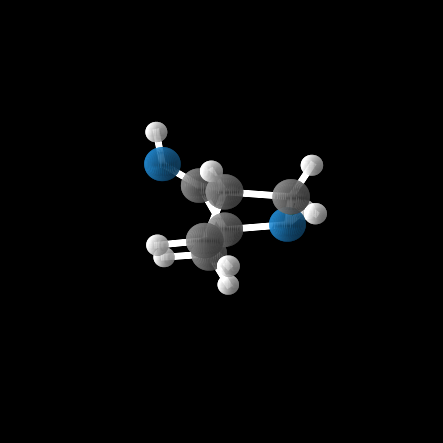} &
       \includegraphics[width=0.22\columnwidth]{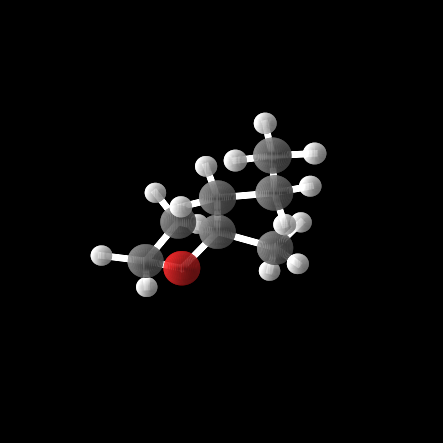} & 
       \includegraphics[width=0.22\columnwidth]{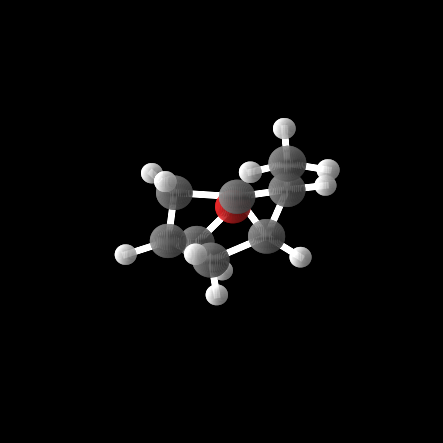} &
       \includegraphics[width=0.22\columnwidth]{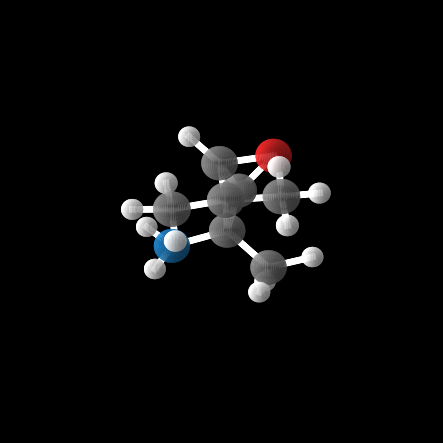} &
       \includegraphics[width=0.22\columnwidth]{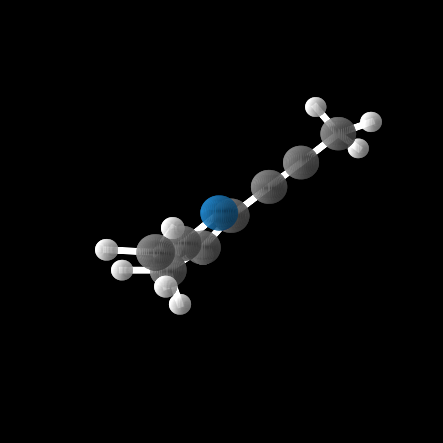} & 
       \includegraphics[width=0.22\columnwidth]{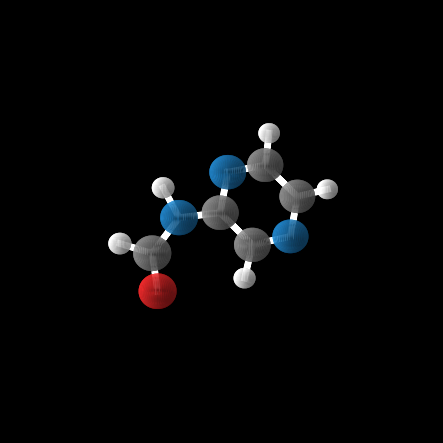}
       \\
       $52.22$ & $63.60$ &  $72.63$ & $79.36$ & $81.33$ &  $88.88$ & $95.13$ & $102.40$\vspace{-15pt}
  \end{tabular} 
  \end{center}
  \caption{Qualitative visualization of controlled generated molecules for various polarizability ($\alpha$) levels. \vspace{-10pt}}\label{fig:qm9}
\end{figure*}

\textbf{Results.} Table~\ref{tab:audio_inv} studies our method in inapinting and super resolution tasks. This experiment demosntrates the ability of our method to work in non-linear setup, where the flow model is trained over a neural representation and the cost function is evaluated on the post-decoded signal (neural representation after decoding). In the inpainting task, we center crop the signal by $10$\% and $20$\%, i.e., for a ten-seconds signal, we mask out two and four seconds respectively. In the super-resolution task we upscale a signal by factors of two, four, and eight,\ie , from $4$kHz, $8$kHz, $16$kHz to $32$kHz respectively. Overall, our method improves upon the baseline. Specifically, in all experiments, our method obtain the lowest FAD metric. In the inpainting task our method obtains a slightly lower PSNR from the baseline. Audio samples are attached in a supplementray material. Additional implementation details appear in Appendix~\ref{app:implementation_latent_aud}.

\subsection{Conditional Molecule Generation on QM9}

\color{black}

In this experiment we illustrate the application of our method for controllable molecule generation, which is of practical significance in the fields of material and drug design.  
The properties targeted for conditional generation ($c$ in \eqref{eq:cond_sample}) include polarizability $\alpha$, orbital energies $\varepsilon_{HOMO},\varepsilon_{LUMO}$ and their gap $\Delta\varepsilon$, Diople moment $\mu$, and heat capacity $C_v$. To assess the properties of the molecules generated, we used a property classifier ($\gF$ in \eqref{eq:cond_sample}) for each property. Those classifiers were trained following the methodology outlined in \cite{hoogeboom2022equivariant}. Further details are in Appendix \ref{app:implementation_latent_qm9}.

\begin{table}
    \caption{Quantitative evaluation of conditional molecule generation. Values reported in the table are MAE (over $10$K samples) for molecule property predictions (lower is better).}
    \resizebox{1.0\linewidth}{!}{%
    \centering
    \begin{tabular}{lcccccc}
    \toprule
        Property & $\alpha$ & $\Delta\varepsilon$ & $\varepsilon_{HOMO}$ & $\varepsilon_{LUMO}$ & $\mu$ & $C_v$ \\
        Units & Bohr$^2$  & meV & meV & meV & D  & $\frac{\text{cal}}{\text{mol}}$K \vspace{3pt}\\  \hline
         QM$9^*$ & 0.10 & 64 & 39 & 36 & 0.043 & 0.040 \\\hline
         EDM & 2.76 & 655 & 356 & 584 & 1.111 & 1.101\\
         E\scriptsize{QUI}\normalsize{FM}& 2.41 &  591  & \cellcolor[HTML]{E8E8E8} 337  &  530  & \cellcolor[HTML]{E8E8E8} 1.106  &  1.033 \\
         G\scriptsize{EO}\normalsize{LDM}& \cellcolor[HTML]{E8E8E8} 2.37   & \cellcolor[HTML]{E8E8E8} 587  &  340  & \cellcolor[HTML]{E8E8E8} 522  &  1.108  & \cellcolor[HTML]{E8E8E8} 1.025  \\\hline
         Ours & \cellcolor[HTML]{90C4C7} {1.39}  & \cellcolor[HTML]{90C4C7} {344}  & \cellcolor[HTML]{90C4C7} {182}  & \cellcolor[HTML]{90C4C7} {330}  & \cellcolor[HTML]{90C4C7} {0.300}  & \cellcolor[HTML]{90C4C7} {0.784}  \\
         \bottomrule  \vspace{-15pt}
    \end{tabular}
    }
    \label{tab:qm9}
\end{table}

\begin{table}
    \caption{Stability and validity evaluation of D-flow on conditional molecule generation ($10$K samples). \vspace{-8pt}}
    \resizebox{1.0\linewidth}{!}{%
    \centering
    \begin{tabular}{lcccccc}
    \toprule
        Property & $\alpha$ & $\Delta\varepsilon$ & $\varepsilon_{HOMO}$ & $\varepsilon_{LUMO}$ & $\mu$ & $C_v$ \\ \hline
        Molecule Stability (\%) & 56.2 &  59.4  &  60.2 &  59.4  &  60.7 &  57.9  \\
        Atom Stability (\%) & 93.6 &  93.9  &  94.1 &  93.8  &  94.2 &  93.6  \\
        Validity (\%) & 77.4 &  79.4  & 80.2 & 79.4 & 81.1 &  78.9 \\
        Validity \& Uniqueness (\%) & 77.4 &  79.4  & 80.2 & 79.4 & 81.1 &  78.9 \\  
         \bottomrule 
    \end{tabular}
    }
    \label{tab:qm9_stab_&_valid}
\end{table}

\textbf{Metrics.}
To assess conditional generation, we calculate the Mean Absolute Error (MAE) between the predicted property value of the generated molecule by the property classifier, \cite{satorras2022en}, and the target property value. According to the conditional training protocol from \cite{hoogeboom2022equivariant}, the property classifier is trained over half of the QM$9$ train set ($50$K) while the remaining half is used for training the conditional generative models. Additionally, we appraise the quality of the generated molecules by evaluating atom stability (the percentage of atoms with correct valency), molecule stability (the percentage of molecules where all atoms are stable), validity (as defined in RDKit \cite{Landrum2016RDKit2016_09_4}), and the uniqueness of the generated molecules.

\textbf{Dataset and baselines.} The generative models used for this experiment are trained using the QM$9$ dataset \cite{Ramakrishnan2014qm9}, a commonly used molecular dataset containing small molecules with up to $29$ atoms. 
The model we use as prior in these experiments is an unconditional equivariant Flow-Matching model with CondOT path \cite{lipman2023flow}, trained on the train set half used in \cite{hoogeboom2022equivariant} for conditional training. 
We compare our method to several state of the art \emph{conditional models}: conditional EDM, Equivariant Flow-Matching (E\scriptsize{QUI}\normalsize{FM}) \cite{song2023equivariant}, and Geometric Latent Diffusion Model (G\scriptsize{EO}\normalsize{LDM})\cite{xu2023geometric} an equivariant latent diffusion model. Additionally, we report the test MAE of each property classifier (denoted as QM$9^*$ in Table \ref{tab:qm9}), which serves as an empirical lower bound. It is important to note that for each specific property of conditional generation, the baseline methods utilized a distinct conditional model, each individually trained for generating that particular property while we used a single unconditional model. 

\textbf{Results.} Table \ref{tab:qm9} demonstrates that our approach significantly outperforms all other baseline methods in the quality of conditional molecule generation. This superior performance is attributed to our direct optimization of the conditional generation. 
Table \ref{tab:qm9_stab_&_valid} presents the stability and validity metrics for our method. In comparison with conditional EDM, which achieves an average molecular stability of $82.1\%$ across different properties, our method reveals a disparity in the stability of the generated molecules. This gap is a consequence of two factors. First, the trained Flow Matching unconditional model achieved inferior performance compared to EDM 
reaching molecular stability of $72.2\%$.  Second, the optimization with respect to the property predictor does not achieve the same quality of generation as regular sampling. We further verify that the gain in MAE that D-Flow presents is not due to the degradation in the percentage of stable molecules and report both MAE values for stable and non-stable molecules withing the $10k$ generated sample, in Table \ref{atab:qm9}. The MAE values for both stable and non-stable molecules are on par and improve by a large margin the existing baselines.
Figure \ref{fig:qm9} visualize the controlled generation for different polarizability $\alpha$ values; all molecules in the figure are valid and stable with a classifier error lower than $1$.
\color{black}

\section{Discussion, Limitations and Future Work}
We have presented a simple and general framework for controlled generation from pre-trained diffusion/flow models and demonstrated its efficacy on a wide range of problems from various domains and data types ranging from images, and audio to molecules. The main limitation of our approach is in its relatively long runtimes (see Section \ref{s:practical_implementation}, and Appendix \ref{app:implementation}) which stems from the need to back-propagate through multiple compositions of the velocity field (equivalently, the diffusion model). Our theoretical analysis and empirical evidence show however that computing gradients through the ODE solution have a desirable implicit bias, producing state of the art results on common conditional generation tasks. Consequently, an interesting future direction is to utilize the implicit bias but with potentially cheaper computational overhead, and draw connections to other biases used in other controlled generation paradigms. 

\section*{Acknowledgments}
OP is supported by a grant from Israel CHE Program for Data Science Research Centers and the Minerva Stiftung.

\section*{Impact Statement}
In this paper we introduce a general approach for controlled data generation from generative priors. This paper presents work whose goal is to advance the field of Machine Learning. There are many potential societal consequences of our work, none which we feel must be specifically highlighted here.

\bibliography{refs}
\bibliographystyle{icml2024}

\newpage
\appendix
\onecolumn

\section{Proofs and Theorems} \label{app:proofs}

\subsection{Proof of Proposition \ref{prop:denoiser}}\label{app:proof_prop_1}
\color{black}
We restate Proposition \ref{prop:denoiser} here:
\begin{proposition}
    For AGPP, the gradient of the denoiser $\hat{x}_{1|t}(x)$ w.r.t $x$ is proportional to the variance of the random variable defined by $p_t(x_1|x)$, formally:
    \begin{equation}\label{e:diff_denoiser_var}
    D_x\hat{x}_{1|t}(x) = \frac{\alpha_t}{\sigma_t^2} \Var_{1|t}(x)
\end{equation}
where 
\begin{equation}\label{e:var_definition}
\Var_{1|t}(x) = \E_{p_t(x_1|x)}\brac{x_1-\hat{x}_{1|t}(x)}\brac{x_1-\hat{x}_{1|t}(x)}^T
\end{equation}
\end{proposition}

\begin{proof}
    We recall a general affine Gaussian path is defined by 
\begin{align}
    p_t(x|x_1) &= \gN(x|\alpha_t x_1, \sigma_t^2 I), \qquad \quad  \ \ \ \text{conditional probability path} \\
    p_t(x) &= \int p_t(x|x_1) q(x_1) dx_1, \qquad  \text{marginal probability path} \label{ae:marginal_path_p}
\end{align}
where $(\alpha_t,\sigma_t)$ define the scheduler and $q$ is the dataset probability density. The velocity fields defining these paths are \cite{lipman2023flow}:
\begin{align}
    u_t(x|x_1) &= a_t x + b_t x_1, \qquad 
    a_t= \frac{\dot{\sigma}_t}{\sigma_t},  \ \ \  b_t = \dot{\alpha}_t - \alpha_t\frac{\dot{\sigma}_t}{\sigma_t} \qquad \qquad \qquad 
 \quad \text{conditional velocity field}  \label{ae:cond_path}\\
    u_t(x) &= \int u_t(x|x_1) p_t(x_1|x) dx_1, \qquad p_t(x_1|x)=\frac{p_t(x|x_1)q(x_1)}{p_t(x)} \qquad  \text{marginal velocity field}  \label{ae:marginal_path}
\end{align}

The differential of the denoiser is then:
\begin{equation}\label{ae:diff_denoiser}
    D_x\hat{x}_{1|t}(x)=D_x \int x_1 p_t(x_1|x) dx_1 = \int x_1 \nabla_x p_t(x_1|x) dx_1
\end{equation}

Since $p_t(x|x_1)$ is a Gaussian:
\begin{equation}
    \nabla_x p_t(x|x_1) = \frac{\alpha_t x_1 - x}{\sigma_t^2}p_t(x|x_1) 
\end{equation}

and plugging into \ref{ae:marginal_path_p}, we have:
\begin{equation}
    \nabla_x p_t(x) = \int \frac{\alpha_t x_1 - x}{\sigma_t^2}p_t(x|x_1)q(x_1)dx_1
\end{equation}

using \ref{ae:marginal_path}, we get:

\begin{equation}
    \nabla_x p_t(x_1|x) = p_t(x_1|x) \frac{\alpha_t}{\sigma_t^2} \parr{x_1-\hat{x}_{1|t}(x)}
\end{equation}

therefore, \ref{ae:diff_denoiser} takes the form:
\begin{align}\label{ae:diff_denoiser_var}
    D_x\hat{x}_{1|t}(x) &= \int \frac{\alpha_t}{\sigma_t^2} p_t(x_1|x) x_1(x_1-\hat{x}_{1|t}(x))^Tdx_1 = \\ &=\int \frac{\alpha_t}{\sigma_t^2}p_t(x_1|x) (x_1-\hat{x}_{1|t}(x))(x_1-\hat{x}_{1|t}(x))^Tdx_1 = \frac{\alpha_t}{\sigma_t^2} \Var_{1|t}(x) \nonumber
\end{align}
\end{proof}

\subsection{Proof of Theorem \ref{thm:D_x0_x1}}\label{app:proof_thm_1}

We restate Theorem \ref{thm:D_x0_x1} here:
\begin{theorem}
     For AGPP velocity field $u_t$ (see \eqref{e:u_t}) and $x(t)$ defined via \eqref{e:dynamics} the differential of $x(1)$ as a function of $x_0$ is 
\begin{equation}\label{ae:D_x0_x1}
        D_{x_0} x(1) =  \sigma_1 \gT\exp\brac{\int_0^1 \gamma_t \Var_{1|t}(x(t)) dt},
\end{equation}

where $\gT \exp[\cdot] $ stands for a time-ordered exponential, $\gamma_t=\frac{1}{2}\frac{d}{dt}\mathrm{snr}(t)$ and we define $\mathrm{snr}(t)=\frac{\alpha_t^2}{\sigma_t^2}$. 
\end{theorem}

\begin{proof}
    To compute the differential of $x(1)$ w.r.t the initial point $x_0$ we utilize adjoint dynamics. 
    
    Let us define the adjoint $p(t)=D_{x(t)}x(1)$. The dynamics of $p(t)$ are defined by the following ODE \cite{evans2005introduction}:
    \begin{align}\label{ae:adjoint_ode}
        \dot{p}(t) &= -D_x u_t(x(t))^T p(t) \\
        p(1) &= D_{x(1)}x(1) = I .\label{ae:adjoint_initial}
\end{align}

To compute $D_{x_0}x(1)$ we solve \ref{ae:adjoint_ode} from time $t=1$ back to time $t=0$.
Then,
\begin{equation}
    p(0) = D_{x_0}x(1).
\end{equation}

First, we will use the properties of AGPPs to further analyze the differential of the velocity field, $D_x u_t(x(t))$, that defines the adjoint dynamics in \eqref{ae:adjoint_ode}. 

We write the velocity field in terms of the denoiser by plugging \ref{ae:cond_path} into \ref{ae:marginal_path}:
\begin{equation}
    u_t(x) = a_t x + b_t \hat{x}_{1|t}(x) 
\end{equation}

and using \eqref{ae:diff_denoiser_var}, the differential of the velocity field is:
\begin{equation}\label{ea:D_x_u_t}
    D_x u_t(x) = a_t I + b_t D_x \hat{x}_{1|t}(x)= a_t I + b_t \frac{\alpha_t}{\sigma_t^2}\Var_{1|t}(x)
\end{equation}

The adjoint dynamics are then given by:
    \begin{align}\label{ae:adjoint_ode_}
        \dot{p}(t) &= A(t) p(t) \\
        p(1) &= D_{x(1)}x(1) = I .\label{ae:adjoint_initial_}
\end{align}

where 
\begin{align}\label{ae:At}
    A(t)=-\parr{a_tI +  \gamma_t\Var_{1|t}(x)}^T.
\end{align}

and we define:
\begin{align}
    \gamma_t=b_t\frac{\alpha_t}{\sigma_t^2} = \parr{\dot{\alpha}_t-\alpha_t\frac{\dot{\sigma}_t}{\sigma_t}}\frac{\alpha_t}{\sigma_t^2}=\frac{\dot{\alpha}_t\alpha_t\sigma_t^2-\alpha_t^2\dot{\sigma}_t\sigma_t}{\sigma_t^4}=\frac{1}{2}\frac{d}{dt}\parr{\frac{\alpha_t^2}{\sigma_t^2}} = \frac{1}{2}\frac{d}{dt}{\mathrm{snr}}(t)
\end{align}

The adjoint ODE \ref{ae:adjoint_ode_} is a non-autonomous linear ODE and together with the initial condition \eqref{ae:adjoint_initial_} its solution is given by \citep{Sakurai_Napolitano_2020}:
\begin{align}
    p(t) = \gT\exp\brac{-\int_t^1 A(s)ds}p(1)
\end{align}

known as the time-ordered exponential, defined as follows:
\begin{align}\label{ae:time_ordered_exp}
    \gT\exp\brac{\int_t^1 A(s)ds} = \sum_{n=1}^\infty \frac{(-1)^n}{n!} \int_t^1\dots \int_t^1 \gT\{A(s_1)A(s_2)\dots A(s_n)\}ds_1ds_2\dots ds_n
\end{align}

where $\gT\{A(s_1)A(s_2)\dots A(s_n)\}$ orders the product of matrices such that the value of $s_i$ decreases from right to left. Alternative equivalent solutions to this differential equation are the dyson series \citep{Sakurai_Napolitano_2020} and the Magnus expansion \citep{magnus}.

When the matrices $\{A(s)\}_{s\in[1,t]}$ commute, \ie, $[A(s),A(s')]=A(s)A(s')-A(s')A(s)=0$, the time-ordered exponential \ref{ae:time_ordered_exp} reduces to the first term in the sum for $n=1$ in \ref{ae:time_ordered_exp}. We can therefore separate the first term in \ref{ae:At}, since $I$ commutes with every matrix, and arrive at a simplified solution for $t=0$:

\begin{equation}
     D_{x_0}x(1) = \sigma_1 \gT\exp\brac{\int_0^1 \gamma_t\Var_{1|t}(x) dt},
 \end{equation}
 where $\exp\brac{\int_0^1 a_t dt}=\sigma_1$ under the assumption that $a_t$ is integrable, concluding the proof.



\end{proof}
 \color{black}
Next we show that the integral in \eqref{e:D_x0_x1} is defined also for $\sigma_1=0$. 
\begin{lemma}\label{lem:delta}
    For a Lipschitz function $f:\Real^d\too\Real$ we have that $\int \gN(x|y,\sigma^2I)f(x)dx = f(y)+\gO(\sigma)$.
\end{lemma}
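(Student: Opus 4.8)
The plan is to expand $f$ around the mean $y$ and exploit that a Gaussian $\gN(x\mid y,\sigma^2 I)$ concentrates at $y$ with spread of order $\sigma$. First I would make the change of variables $x = y + \sigma z$, so that $z\sim\gN(0,I)$ and the integral becomes $\int \gN(z\mid 0,I)\, f(y+\sigma z)\, dz = \E_{z}[f(y+\sigma z)]$. The target is then to show $\E_z[f(y+\sigma z)] = f(y) + \gO(\sigma)$.

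The key estimate is simply Lipschitz continuity: writing $L$ for the Lipschitz constant of $f$, we have $|f(y+\sigma z) - f(y)| \le L\sigma\|z\|$ pointwise. Taking expectations and using the triangle inequality (Jensen),
\begin{equation*}
    \abs{\E_z[f(y+\sigma z)] - f(y)} \le \E_z\abs{f(y+\sigma z)-f(y)} \le L\sigma\, \E_z\norm{z} = L\sigma\sqrt{2}\,\frac{\Gamma((d+1)/2)}{\Gamma(d/2)},
\end{equation*}
where $\E_z\|z\|$ is the mean of a $\chi^d$ random variable, a finite constant depending only on $d$. Hence the left-hand side is bounded by a constant (depending on $L$ and $d$) times $\sigma$, which is exactly the claim $\int\gN(x\mid y,\sigma^2 I)f(x)\,dx = f(y) + \gO(\sigma)$.

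There is essentially no serious obstacle here; the only point requiring a sentence of care is that $\gO(\sigma)$ is understood with a constant that may depend on the dimension $d$ (and on $\mathrm{Lip}(f)$), which is harmless for the intended application where $d$ is fixed. If one wanted the bound uniformly in $y$, note the estimate above is already independent of $y$, so no extra work is needed. To connect back to Theorem~\ref{thm:D_x0_x1}, I would then apply this lemma coordinatewise (or entrywise to the matrix-valued integrand $x_1(x_1-\hat x_1)^T$, whose relevant components are locally Lipschitz on the support) to argue that the denoiser and $\Var(x_1\mid x(t))$ behave well as $\sigma_t\to 0$, so that $D_{x_0}x(1)$ remains well defined in the limit $\sigma_1=0$.
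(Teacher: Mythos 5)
Your proof is correct and follows essentially the same route as the paper's: a change of variables $x=y+\sigma z$ reducing to a standard Gaussian, then the Lipschitz bound $|f(y+\sigma z)-f(y)|\le L\sigma\|z\|$ and the finiteness of $\E\|z\|$ to conclude the $\gO(\sigma)$ estimate. Your explicit evaluation of $\E\|z\|$ via the $\chi^d$ mean and the remark on uniformity in $y$ are harmless additions beyond what the paper states.
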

\begin{proof}
    \begin{align*}
        \abs{\int \gN(x|y,\sigma^2I)f(x)dx - f(y)} &\leq 
        \int \gN(x|y,\sigma^2I))\abs{f(x)-f(y)}dx \\
        &= \int \gN(z|0,I)\abs{f(\sigma z +y) - f(y)} dz \\
        &\leq K \sigma\int \gN(z|0,I)\abs{ z} dz \\
        &= \gO(\sigma),
    \end{align*}
    where in the first equality we performed a change of variable $z=\frac{x-y}{\sigma}$, and in the second inequality we used the fact that $f$ is Lipschitz with constant $K>0$. 
\end{proof}

Using this Lemma we prove (under the assumption that $p_1(x)$ and its derivatives is Lipschitz):
\begin{proposition}\label{prop:denoiser_asym}
    The denoiser asymptotics at $t\too 1$ is 
    \begin{equation}
        \hat{x}_{1|t}(x) = \frac{x}{\alpha_t} + \gO(\sigma_t)
    \end{equation}
\end{proposition}
\begin{proof}
First we note that we assume $\sigma_t\too 0$ and $\alpha_t\too 1$ as $t\too 1$, 
\begin{equation}\label{ea:gaussian_change}
    \gN(x|\alpha_t x_1, \sigma_t^2I) = c_t\gN\parr{x_1\bigg|\frac{x}{\alpha_t},\parr{\frac{\sigma_t}{\alpha_t}}^2I},
\end{equation}
where $c_t$ is some normalization constant such that $c_1=1$. Now,
    \begin{align}
        p_t(x) &= \int \gN(x|\alpha_t x_1, \sigma_t^2I) p_1(x_1)dx_1 \\
        &= c_t\int \gN\parr{x_1\bigg|\frac{x}{\alpha_t},\parr{\frac{\sigma_t}{\alpha_t}}^2I} p_1(x_1)dx_1  \\ 
        &= c_t p_1\parr{\frac{x}{\alpha_t}} + \gO(\sigma_t),
    \end{align}
where in second equality we used \eqref{ea:gaussian_change} and the last equality Lemma \ref{lem:delta}.
\begin{align}
    \hat{x}_{1|t}(x) &= \int x_1 p_t(x_1|x) dx_1 \\
    &= \int x_1 \frac{\gN(x|\alpha_tx_1,\sigma_t^2I)p_1(x_1)}{p_t(x)} dx_1 \\
    &= \int x_1 \frac{c_t\gN\parr{x_1\Big|\frac{x}{\alpha_t},\parr{\frac{\sigma_t}{\alpha_t}}^2I}p_1(x_1)}{p_t(x)} dx_1 \\
    &= \frac{c_t\frac{x}{\alpha_t}p_1\parr{\frac{x}{\alpha_t}} + \gO(\sigma_t)} {c_t p_1\parr{\frac{x}{\alpha_t}} + \gO(\sigma_t)}\\
    &= \frac{x}{\alpha_t} + \gO(\sigma_t),
\end{align}
where in the second equality we used the definition of $p_t(x_1|x)$, in the third equality we used \eqref{ea:gaussian_change}, and in the fourth equality we used Lemma \ref{lem:delta}.\end{proof}

Now we can show that $D_x u_t(x(t))$ is bounded as $t\too 1$
\begin{align}
    D_x u_t(x(t)) &= a_t I + b_t D_x \hat{x}_{1|t}(x) \\
    &= a_tI + b_t\parr{\frac{1}{a_t}I + \gO(\sigma_t)} \\
    &= \frac{\dot{\alpha}_t}{\alpha_t}I + \gO(1),
\end{align}
where in the first equality we used \eqref{ea:D_x_u_t}, in the second Proposition \ref{prop:denoiser_asym} (and the fact that the derivatives of $p_1$ are Lipschitz for the derivation of the asymptotic rule), and in the last equality \eqref{ae:cond_path}. Furthermore $D_x u_t(x(t))$ is bounded as $t\too 0$ as both $a_0,b_0$ are well defined. This means that  $D_x u_t(x(t))$ is integrable over $[0,1]$.

\color{black}
\subsection{Discrete Time Analysis}\label{app:discrete_time}
Theorem \ref{thm:D_x0_x1} analyzes the continuous time case, providing intuition about the behavior of the dynamics of $x(1)$ when changing the initial condition $x_0$. The final expression \ref{e:D_x0_x1}, however, involves a time-ordered exponential which may be hard to interpret. Furthermore, our experiments show that even with a small number of discrete steps, differentiating $x(1)$ with respect to $x_0$ yields meaningful gradients, performing well in practice (see \ref{app:implementation}).

Let us consider Euler solver with $N$ uniform steps of size $h=\frac{1}{N}$, with initial point $x_0$. An intermediate point at time $mh$, $x_{mh}$ is given by:

\begin{equation}
    x_{(m+1)h} = x_{mh} + h u_{mh}(x_{mh} )
\end{equation}

We are interested at the derivative of $x_{Nh}=x(1)$ w.r.t $x_0$.

By the chain rule and \eqref{ea:D_x_u_t}, one can write:
\begin{equation}\label{ea:discrete_dx1_dx0}
    D_{x_0} x_1 = \prod_{m=0}^{N-1}  D_{x_{mh}} x_{(m+1)h} = \prod_{m=0}^{N-1} \Big({(1+ha_{mh}) I + h\gamma_{mh}\Var_{1|mh}(x_{mh})}\Big)
\end{equation}
note that this is also a time-ordered product, with $m$ decreasing from left to right.

For the CondOT probability path, where $\alpha_t=t, \sigma_t=1-t$, \ref{ea:discrete_dx1_dx0} takes the form:
\begin{equation}
     D_{x_0} x_1=\prod_{m=0}^{N-1} \Big( \frac{1-(m+1)h}{1-mh}I + \frac{mh^2}{(1-mh)^3}\Var_{1|mh}(x_{mh})\Big)
\end{equation}

\color{black}
\subsection{On Flow-Matching, Denoisers and Noise Prediction} 
Consider a general affine conditional probability path defined by the following transport map:
\begin{align*}
    x_t = \sigma_t x_0 + \alpha_t x_1
\end{align*}
where $x_0\sim p_0$ and $x_1\sim p_1$. 

For different choices of $\sigma_t,\alpha_t$ we can parametrize known diffusion and flow-matching paths. 
The corresponding conditional vector field on $x_1$ is:
\begin{align*}
    u_t(x|x_1) = \frac{\dot{\sigma}_t}{\sigma_t}(x-\alpha_t x_1) + \dot{\alpha}_t x_1 = \frac{\dot{\sigma}_t}{\sigma_t}x - \parr{\frac{\dot{\sigma}_t\alpha_t}{\sigma_t} - \dot{\alpha}_t}x_1
\end{align*}
 and the conditional vector field on $x_0$ is:

 \begin{align*}
    u_t(x|x_0) = \dot{\sigma}_tx_0+\frac{\dot{\alpha}_t}{\alpha_t}(x-\sigma_tx_0)=\frac{\dot{\alpha}_t}{\alpha_t}x - \parr{\frac{\dot{\alpha}_t\sigma_t}{\alpha_t}-\dot{\sigma}_t}x_0
\end{align*}

where $\dot{f}=\frac{d}{dt}f$.


Consider the marginal velocity field:
\begin{align*}
    u_t(x) = \int u_t(x|x_1) p_t(x_1|x) dx_1 = \int u_t(x|x_0) p_t(x_0|x) dx_0
\end{align*}

One can express it in terms of the optimal \emph{denoiser} function, $\hat{x}_{1|t}(x)$:

\begin{align}\label{ae:denoiser_to_ut}
    u_t(x) &= \frac{\dot{\sigma}_t}{\sigma_t}\int x p_t(x_1|x)dx_1 - \parr{\frac{\dot{\sigma}_t\alpha_t}{\sigma_t} - \dot{\alpha}_t}\int x_1 p_t(x_1|x)dx_1  =\frac{\dot{\sigma}_t}{\sigma_t}x - \parr{\frac{\dot{\sigma}_t\alpha_t}{\sigma_t} - \dot{\alpha}_t}\hat{x}_{1|t}(x)
\end{align}

For Cond-OT:
\begin{align}
    u_t(x) =\frac{\hat{x}_{1|t}(x)-x}{1-t}
\end{align}

Or, in terms of the optimal noise predictor, $\eps_t(x)$, like in DDPM:
\begin{align*}
    u_t(x) = \frac{\dot{\alpha}_t}{\alpha_t}\int x p_t(x_0|x)dx_0 -\parr{\frac{\dot{\alpha}_t\sigma_t}{\alpha_t}-\dot{\sigma}_t} \int x_0 p_t(x_0|x)dx_0 = \frac{\dot{\alpha}_t}{\alpha_t}x -\parr{\frac{\dot{\alpha}_t\sigma_t}{\alpha_t}-\dot{\sigma}_t} \eps_t(x)
\end{align*}
and for Cond-OT:
\begin{equation}\label{ae:eps_to_vf}
    u_t(x) = \frac{x-\eps_t(x) }{t}
\end{equation}

\newpage

\section{Implementation details} \label{app:implementation}
\subsection{Linear Inverse Problems on Images}\label{app:implementation_linear}

\textbf{Optimization details. } For all experiments in this section we used the LBFGS optimizer with 20 inner iterations for each optimization step with line search. Stopping criterion was set by a target PSNR value, varying for different tasks. The solver used was midpoint with 6 function evaluations. The losses, regularizations, initializations and stopping criterions of our algorithm for the linear inverse problems are listed in Table \ref{atab:lin_inv_spec}. In the Table $\chi^d$ regularization corresponds to equation \ref{e:chid_reg} and $\lambda$ denotes the coefficients used. 

\begin{table}[H] 
\center
\caption{Algorithmic choices for the ImageNet-128 linear inverse problems tasks.}
\resizebox{1.0\linewidth}{!}{%
\begin{tabular}{ccccccccccc}
\hline
               &  & \multicolumn{2}{c}{\textbf{Inpainting-Center}} &  & \multicolumn{2}{c}{\textbf{Super-Resolution X2}} &  & \multicolumn{2}{c}{\textbf{Gaussian Deblur}} &  \\ \cline{3-4} \cline{6-7} \cline{9-10}
               &  & $\sigma_y=0$         & $\sigma_y=0.05$         &  & $\sigma_y=0$          & $\sigma_y=0.05$          &  & $\sigma_y=0$        & $\sigma_y=0.05$        &  \\ \cline{1-1} \cline{3-4} \cline{6-7} \cline{9-11} 
Loss           &  & \multicolumn{2}{c}{$-\mathrm{PSNR}(Hx,y)$}                                               &  &      \multicolumn{2}{c}{$-\mathrm{PSNR}(Hx,y)$}                                            &  &  $-\mathrm{PSNR}(H^\dagger Hx,H^\dagger y)$                   &   $-\mathrm{PSNR}(Hx,y)$                     &  \\
Regularization &  &    None                  &   $\chi^d$, with $\lambda=0.01$                       &  &   None                    &   $\chi^d$, with $\lambda=0.01$                       &  &   None                  &   $\chi^d$, with $\lambda=0.01$                      &  \\
Initialization &  &           \multicolumn{2}{c}{$0.1$ blend}                                   &  &   \multicolumn{2}{c}{$0.1$ blend}                                               &  &                    \multicolumn{2}{c}{$0.1$ blend}                         & \\
Target PSNR &  &           45 & 32                                   &  &   55 & 32                                              &  &                  55 & 32                        & 
\end{tabular}
}
\label{atab:lin_inv_spec}
\end{table}

\textbf{Runtimes. } For noiseless tasks: inpainting center crop took on avarage $10$ minutes per image, super resolution took $12.5$ minutes per image and Gaussian deblurring took $15.5$ minutes per image.
 For the noisy tasks: inpainting center crop took on avarage $4$ minutes per image, super resolution took $2.5$ minute per image and Gaussian deblurring took $3.5$ minutes per image. Experiments ran on 32GB NVIDIA V100 GPU.

 Metrics are computed using the open source TorchMetrics library \cite{Detlefsen2022}.

\textbf{RED-Diff baseline. } To use the RED-Diff baseline with a FM cond-OT trained model we transform the velocity field to epsilon prediction according to \ref{ae:eps_to_vf}. We searched for working parameters and reported results that outperformed the results that were produced by \cite{pokle2023trainingfree} with an epsilon prediction model, otherwise we kept the number from \cite{pokle2023trainingfree}.
 
\subsection{Inpainting with Latent Flow Models}
\subsubsection{Image inpainting} \label{app:implementation_latent_img}
\textbf{Optimization details.} In this experiment we used the LBFGS optimizer with 20 inner iterations for each optimization step with line search. Stopping criterion was set by a runtime limit of $30$ minutes, but optimization usually convergences before. The solver used was midpoint with 6 function evaluations and the loss was negative $\text{PSNR}$ without regularization. We initialized the algorithm with a backward blend with $\alpha=0.25$. To facilitate the backpropagation through a large T2I model we use gradient checkpointing. 

The validation set of the COCO dataset, used for evaluation, was downloaded from \hyperlink{http://images.cocodataset.org/zips/val2017.zip}{http://images.cocodataset.org/zips/val2017.zip}.

\textbf{RED-Diff baseline. } To adapt RED-Diff to a latent space diffusion model, let us recall the loss used in RED-Diff:
\begin{equation}
    \ell(\mu) = \norm{y-f(\mu)}^2 + \lambda_t (\texttt{sg}\brac{\eps(x(t),t)-\eps})^T\mu
\end{equation}
where $f$ can be any differentiable function. In latent diffusion/flow model for inverse problems we can model $f$ as $f=H(\texttt{decode}(\mu))$, where $\texttt{decode}$ applies the decoder of the autoencoder used in the latent diffusion/flow model and $H$ is the corruption operator. We use $\text{lr}=0.25, \lambda=0.25$.

\subsubsection{Audio inpainting}\label{app:implementation_latent_aud}

\textbf{Optimization details.} We follow the same setup described in~\ref{app:implementation_latent_img}. Differently, we use $10$ inner iterations and stop after $100$ global iterations. We initialize the algorithm with a backward blend with $\alpha=0.1$.

\textbf{RED-Diff baseline. } We follow the same adaptation described above in \ref{app:implementation_latent_img}. We use $\text{lr}=0.05, \lambda=0.5$.

\subsection{Conditional Molecule Generation on QM9}\label{app:implementation_latent_qm9}

\textbf{Optimization details.} In this section, we describe how Algorithm \ref{alg:main} was practically applied in the QM$9$ experiment. We initialized $x_0\in\Real^{n\times 9}$ for the experiment, where $n$ represents the molecule's atom count and $9$ the number of attributes per atom, using a standard Gaussian distribution. To enhance optimization process stability, we ensured $x_0$ had a feature-wise mean of zero and a standard deviation of one by normalizing it after every optimization step. We employed the midpoint method for the ode solver, with a total of $100$ function evaluations, \ie step size of $1/50$. The optimization technique utilized was LBFGS with line search, configured with $5$ optimization steps and a limit of $5$ inner iterations for each step. The learning rate was set to $1$. On average, generating a single molecule took approximately $2.5$ minutes using a single NVIDIA Quadro RTX8000 GPU.

\begin{table}[h]
\centering
\caption{Comparison of generated molecules quality using different solvers and D-Flow. }
\renewcommand{\tabcolsep}{1.8pt}
\resizebox{1.0\linewidth}{!}{
\begin{tabular}{lccccc}
\toprule
\begin{tabular}[c]{@{}l@{}}Sample Method\\  \end{tabular} & \begin{tabular}[c]{@{}c@{}}NFE\\ (\#) \end{tabular} & \begin{tabular}[c]{@{}c@{}}Molecule Stability\\  (\%)\end{tabular} & \begin{tabular}[c]{@{}c@{}}Atom Stability\\  (\%)\end{tabular} & \begin{tabular}[c]{@{}c@{}}Validity\\  (\%)\end{tabular} & \begin{tabular}[c]{@{}c@{}}Validity \& Uniqueness\\  (\%)\end{tabular} \\ \hline
Dopri5 Adaptive Solver & - & 72.03 & 96.14 & 85.00 & 83.84 \\
Midpoint (50 steps) & 100 & 72.10 & 96.18 & 85.56 & 84.39 \\ \hline
Midpoint  (50 steps) + optimization & 100 & 58.97 & 93.87 & 79.38 & 79.38\\
\bottomrule
\end{tabular} 
}
\label{tab:sampling_comp}
\end{table}
\color{black}
In Table \ref{atab:qm9} below we report MAE values over the split to stable and non-stable molecules within the $10k$ generated samples. Other baselines and the result denoted as 'Ours' in the table report MAE on the entire $10k$ set of molecules without distinguishing between stable and non-stable ones. Since our method produces lower stability percentage, we also report the MAE on the stable and non-stable splits. It can be seen that our improved MAE performance is not due to producing non-stable molecules with lower MAE, but performance is also SOTA on generated stable molecules.
\color{black}
\begin{table}[h]
    \caption{Quantitative evaluation of conditional molecule generation. Values reported in the table are MAE (over $10$K samples) for molecule property predictions (lower is better).}
     \centering
    \resizebox{0.5\linewidth}{!}{%
    \centering
    \begin{tabular}{lcccccc}
    \toprule
        Property & $\alpha$ & $\Delta\varepsilon$ & $\varepsilon_{HOMO}$ & $\varepsilon_{LUMO}$ & $\mu$ & $C_v$ \\
        Units & Bohr$^2$  & meV & meV & meV & D  & $\frac{\text{cal}}{\text{mol}}$K \vspace{3pt}\\  \hline
         QM$9^*$ & 0.10 & 64 & 39 & 36 & 0.043 & 0.040 \\\hline
         EDM & 2.76 & 655 & 356 & 584 & 1.111 & 1.101\\
         E\scriptsize{QUI}\normalsize{FM}& 2.41 &  591  &  337  &  530  &  1.106  &  1.033 \\
         G\scriptsize{EO}\normalsize{LDM}&  2.37   &  587  &  340  &  522  &  1.108  &  1.025  \\\hline
         Ours &  {1.39}  &  {344}  &  {182}  &  {330}  &  {0.300}  & {0.784}  \\
          Ours-stable & {1.40}  &  {347}  & {180}  & {337}  &  {0.287}  &  {0.835}  \\
           Ours-non stable & {1.38}  & {340}  & {186}  & {318}  & {0.321}  &  {0.714}  \\
         \bottomrule  \vspace{-15pt}
    \end{tabular}
    }
    \label{atab:qm9}
\end{table}

\textbf{QM9.}
The QM$9$ dataset \cite{Ramakrishnan2014qm9}, a widely recognized collection, encompasses molecular characteristics and atomic positions for $130$K small molecules, each containing no more than $9$ heavy atoms (up to $29$ atoms when including hydrogens). The train/validation/test partitions used are according to \cite{anderson2019cormorant} and consists of $100$K/$18$K/$13$ samples per partition. We provide additional details regarding the properties used in the experiment:
 \begin{itemize}
     \item $\alpha$ Polarizabilty - Tendency of a molecule to acquire an electric dipole moment when subjected to anexternal electric field. 
     \item $\varepsilon_{HOMO}$ - Highest occupied molecular energy.
     \item  $\varepsilon_{LUMO}$ - Lowest unoccupied molecular energy.
     \item  $\Delta\varepsilon$ - The difference between HOMO and LUMO.
     \item $\mu$ - Dipole moment.
     \item $C_v$ - Heat capacity at $298.15$K.
 \end{itemize}

\color{black}
\section{Additional Experiments and Ablations}

\subsection{Linear Inverse Problems on Images: Denoising}
In this experiment we consider the task of denoising. The corrupted signal, $y$, is given by $y= x + \eps,\; \eps\sim\gN(0,\sigma_yI)$, with $\sigma_y=0.05$. Hyperparameters are the same as in Table \ref{atab:lin_inv_spec} for the noisy experiments. Reported metrics are in Table \ref{tab:denoising}.

\begin{table*}[h]
\centering
\caption{ Quantitative evaluation of denoising inverse problem on face-blurred ImageNet-128.}
\resizebox{0.5\linewidth}{!}{%
\begin{tabular}{llcccc}
\hline
                &  & \multicolumn{4}{c}{\textbf{Denoising}}                                                                                                                                                                                                                                                  \\ \cline{3-6} 
Method          &  & FID $\downarrow$              & LPIPS $\downarrow$            & PSNR $\uparrow$                & SSIM $\uparrow$                              \\ \cline{1-1} \cline{3-6} 
 \cline{1-1} \cline{3-6} 
$\sigma_y=0.05$ &  & \multicolumn{1}{l}{}          & \multicolumn{1}{l}{}          & \multicolumn{1}{l}{}           & \multicolumn{1}{l}{}                                                    \\
\;\;$\Pi$GDM \tiny{\cite{song2023pseudoinverseguided}}        &  & 9.60                          & 0.107                        & 35.11                          & 0.903                          \\
\;\;OT-ODE \tiny{\cite{pokle2023trainingfree}}           &  & \cellcolor[HTML]{E8E8E8}3.14  & \cellcolor[HTML]{E8E8E8}0.062 & \cellcolor[HTML]{90C4C7}37.34  & \cellcolor[HTML]{90C4C7}0.964 \\
\;\;RED-Diff \tiny{\cite{mardani2023variational}}       &  & 9.19                         & 0.105                         & 32.52                         & 0.895                              \\
\;\;Ours            &  & \cellcolor[HTML]{90C4C7}2.83 & \cellcolor[HTML]{90C4C7}0.060 & \cellcolor[HTML]{E8E8E8}36.05 & \cellcolor[HTML]{E8E8E8}0.952                                             
\end{tabular}
}
\label{tab:denoising}
\end{table*}


\subsection{Ablation: Regularization Coefficient}

As reported in Table \ref{atab:lin_inv_spec}, on the tasks of linear inverse problems on images, we used the source point $\chi^d$ regularization, \eqref{e:chid_reg}, for the noisy case. In the plot below, \ref{afig:reg_coeff}, we report the evaluation metrics (FID, LPIPS, PSNR, SSIM) for varying regularization coefficient values, $\lambda$, on the task of noisy super-resolution. All other hyperparameters are as reported in Table \ref{atab:lin_inv_spec}. 

\begin{figure}[H]
  \begin{center}
  \begin{tabular}{cc}
       \includegraphics[width=0.35\columnwidth]{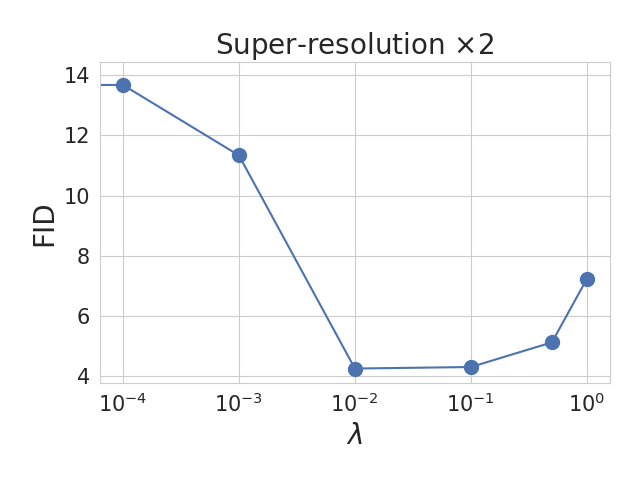} &  
       \includegraphics[width=0.35\columnwidth]{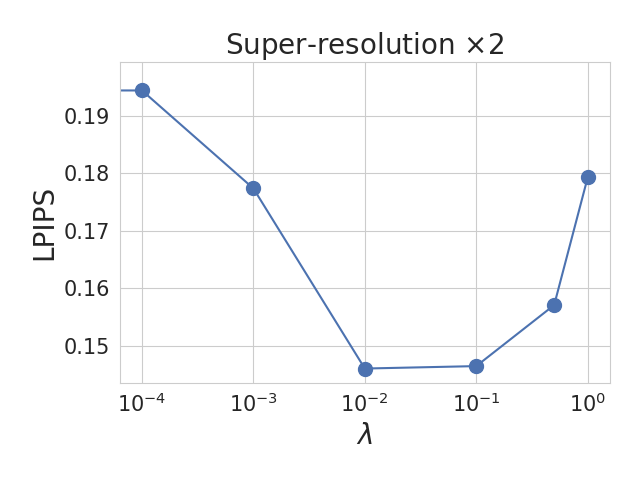} \\
       \includegraphics[width=0.35\columnwidth]{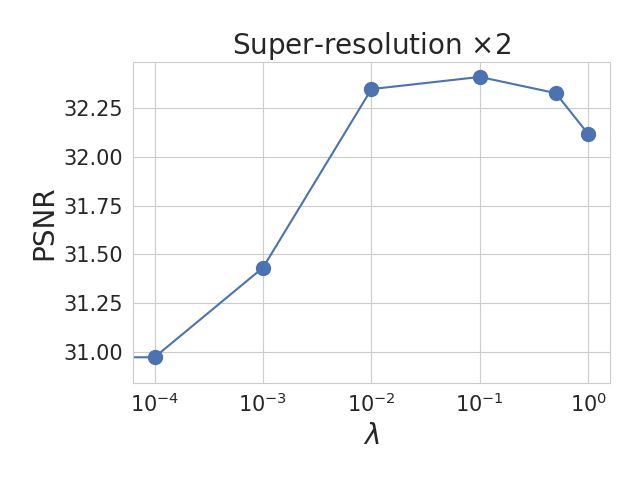} & 
       \includegraphics[width=0.35\columnwidth]{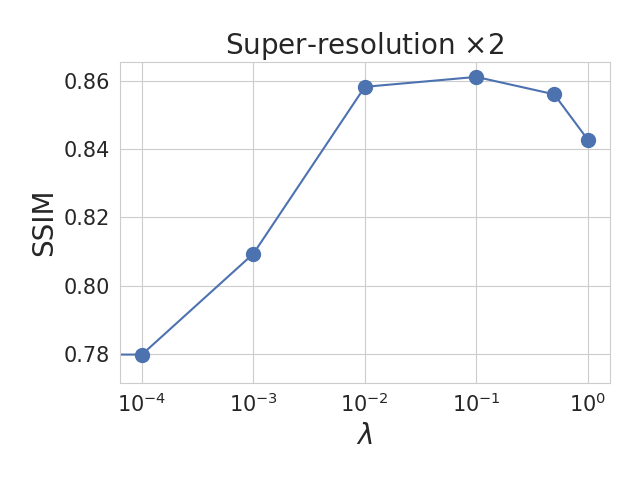}  \vspace{-10pt}
  \end{tabular}    
  \end{center}
  \caption{Evaluation metrics vs. regularization coefficient $\lambda$ of $\chi^d$ regularization over $x_0$ for noisy super-resolution on ace-blurred ImageNet-128.}\label{afig:reg_coeff}
\end{figure}

\color{black}


\newpage
\section{Additional Qualitative Results}
\begin{figure}[H]
  \begin{center}
  \begin{tabular}{@{\hspace{0pt}}c@{\hspace{3pt}}c@{\hspace{3pt}}c@{\hspace{3pt}}c@{\hspace{3pt}}c@{\hspace{3pt}}c@{\hspace{0pt}}}
       \includegraphics[width=0.16\columnwidth]{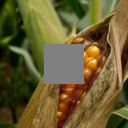} &  
       \includegraphics[width=0.16\columnwidth]{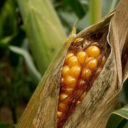} &
       \includegraphics[width=0.16\columnwidth]{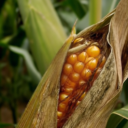} & 
       \includegraphics[width=0.16\columnwidth]{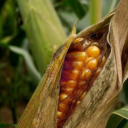} &
       \includegraphics[width=0.16\columnwidth]{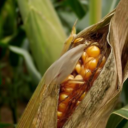} &
       \includegraphics[width=0.16\columnwidth]{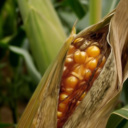}  \\
       \includegraphics[width=0.16\columnwidth]{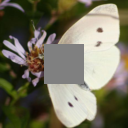} &  
       \includegraphics[width=0.16\columnwidth]{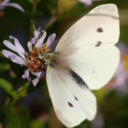} &
       \includegraphics[width=0.16\columnwidth]{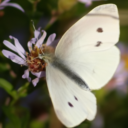} & 
       \includegraphics[width=0.16\columnwidth]{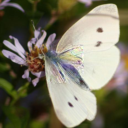} &
       \includegraphics[width=0.16\columnwidth]{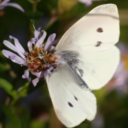} &
       \includegraphics[width=0.16\columnwidth]{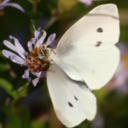}  \\
       \includegraphics[width=0.16\columnwidth]{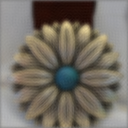} &  
       \includegraphics[width=0.16\columnwidth]{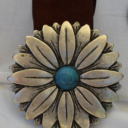} &
       \includegraphics[width=0.16\columnwidth]{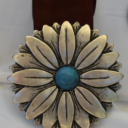} & 
       \includegraphics[width=0.16\columnwidth]{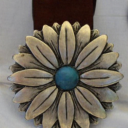} &
       \includegraphics[width=0.16\columnwidth]{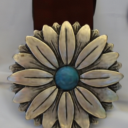} &
       \includegraphics[width=0.16\columnwidth]{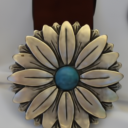}  \\
       \includegraphics[width=0.16\columnwidth]{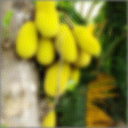} &  
       \includegraphics[width=0.16\columnwidth]{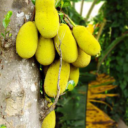} &
       \includegraphics[width=0.16\columnwidth]{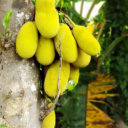} & 
       \includegraphics[width=0.16\columnwidth]{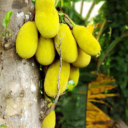} &
       \includegraphics[width=0.16\columnwidth]{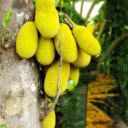} &
       \includegraphics[width=0.16\columnwidth]{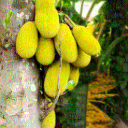}  \\
       \includegraphics[width=0.16\columnwidth]{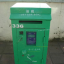} &  
       \includegraphics[width=0.16\columnwidth]{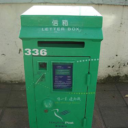} &
       \includegraphics[width=0.16\columnwidth]{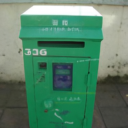} & 
       \includegraphics[width=0.16\columnwidth]{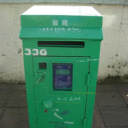} &
       \includegraphics[width=0.16\columnwidth]{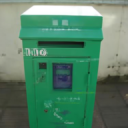} &
       \includegraphics[width=0.16\columnwidth]{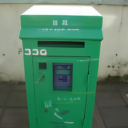}  \\
       \includegraphics[width=0.16\columnwidth]{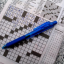} &  
       \includegraphics[width=0.16\columnwidth]{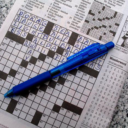} &
       \includegraphics[width=0.16\columnwidth]{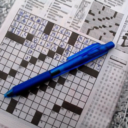} & 
       \includegraphics[width=0.16\columnwidth]{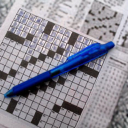} &
       \includegraphics[width=0.16\columnwidth]{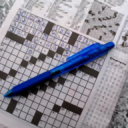} &
       \includegraphics[width=0.16\columnwidth]{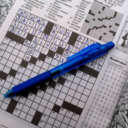}  \\
       \scriptsize Distorted & \scriptsize Ground Truth & \scriptsize Ours & \scriptsize RED-Diff & \scriptsize  OT-ODE & \scriptsize $\Pi$GDM \vspace{-10pt}
  \end{tabular}    
  \end{center}
  \caption{Qualitative comparison for linear inverse problems on ImageNet-128 for the noiseless case. GT samples come from the face-blurred ImageNet-128 validation set.}\label{afig:linear_inv_imagenet}
\end{figure}

\begin{figure}
  \begin{center}
  \begin{tabular}{@{\hspace{0pt}}c@{\hspace{3pt}}c@{\hspace{3pt}}c@{\hspace{3pt}}c@{\hspace{3pt}}c@{\hspace{3pt}}c@{\hspace{0pt}}}
       \includegraphics[width=0.16\columnwidth]{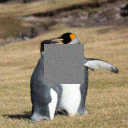} &  
       \includegraphics[width=0.16\columnwidth]{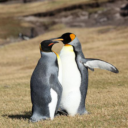} &
       \includegraphics[width=0.16\columnwidth]{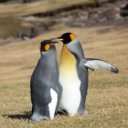} & 
       \includegraphics[width=0.16\columnwidth]{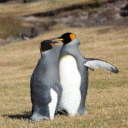} &
       \includegraphics[width=0.16\columnwidth]{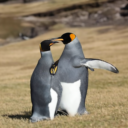} &
       \includegraphics[width=0.16\columnwidth]{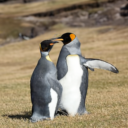}  \\
       \includegraphics[width=0.16\columnwidth]{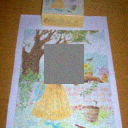} &  
       \includegraphics[width=0.16\columnwidth]{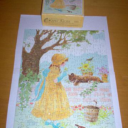} &
       \includegraphics[width=0.16\columnwidth]{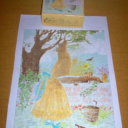} & 
       \includegraphics[width=0.16\columnwidth]{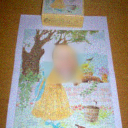} &
       \includegraphics[width=0.16\columnwidth]{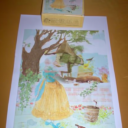} &
       \includegraphics[width=0.16\columnwidth]{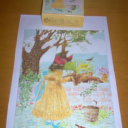}  \\
       \includegraphics[width=0.16\columnwidth]{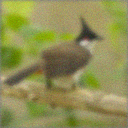} &  
       \includegraphics[width=0.16\columnwidth]{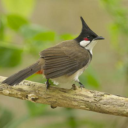} &
       \includegraphics[width=0.16\columnwidth]{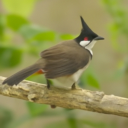} & 
       \includegraphics[width=0.16\columnwidth]{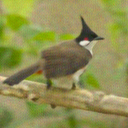} &
       \includegraphics[width=0.16\columnwidth]{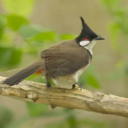} &
       \includegraphics[width=0.16\columnwidth]{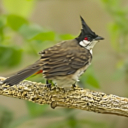}  \\
       \includegraphics[width=0.16\columnwidth]{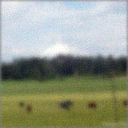} &  
       \includegraphics[width=0.16\columnwidth]{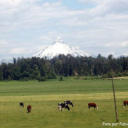} &
       \includegraphics[width=0.16\columnwidth]{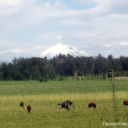} & 
       \includegraphics[width=0.16\columnwidth]{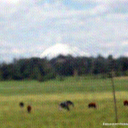} &
       \includegraphics[width=0.16\columnwidth]{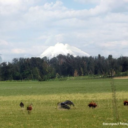} &
       \includegraphics[width=0.16\columnwidth]{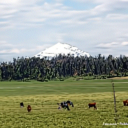}  \\
       \includegraphics[width=0.16\columnwidth]{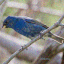} &  
       \includegraphics[width=0.16\columnwidth]{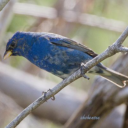} &
       \includegraphics[width=0.16\columnwidth]{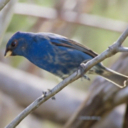} & 
       \includegraphics[width=0.16\columnwidth]{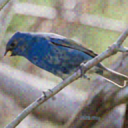} &
       \includegraphics[width=0.16\columnwidth]{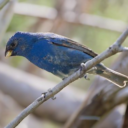} &
       \includegraphics[width=0.16\columnwidth]{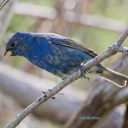}  \\
       \includegraphics[width=0.16\columnwidth]{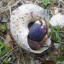} &  
       \includegraphics[width=0.16\columnwidth]{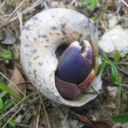} &
       \includegraphics[width=0.16\columnwidth]{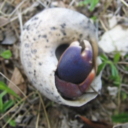} & 
       \includegraphics[width=0.16\columnwidth]{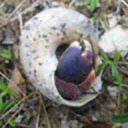} &
       \includegraphics[width=0.16\columnwidth]{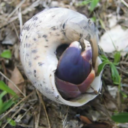} &
       \includegraphics[width=0.16\columnwidth]{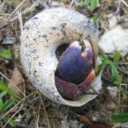}  \\
       \scriptsize Distorted & \scriptsize Ground Truth & \scriptsize Ours & \scriptsize RED-Diff & \scriptsize  OT-ODE & \scriptsize $\Pi$GDM \vspace{-10pt}
  \end{tabular}    
  \end{center}
  \caption{Qualitative comparison for linear inverse problems on ImageNet-128 for the noisy case. GT samples come from the face-blurred ImageNet-128 validation set.}\label{afig:linear_inv_imagenet_2}
\end{figure}

\begin{figure}
  \begin{center}
  \begin{tabular}{@{\hspace{0pt}}c@{\hspace{3pt}}c@{\hspace{3pt}}c@{\hspace{3pt}}c@{\hspace{0pt}}}
       \includegraphics[width=0.22\columnwidth]{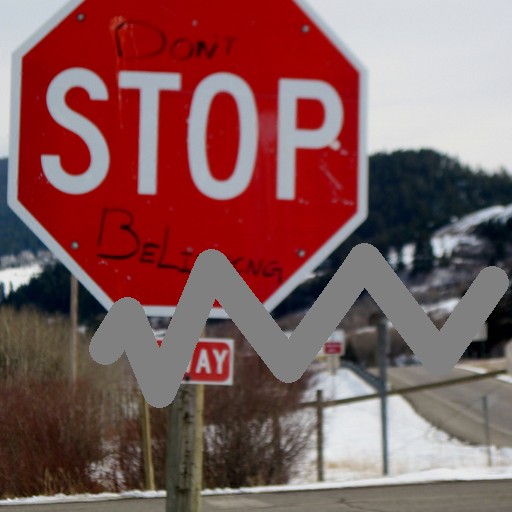} &  
       \includegraphics[width=0.22\columnwidth]{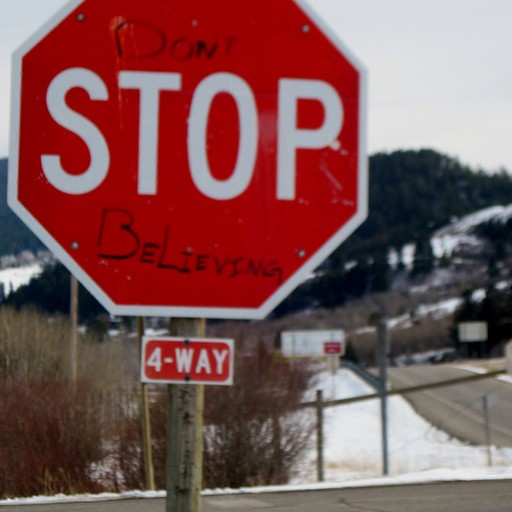} &
       \includegraphics[width=0.22\columnwidth]{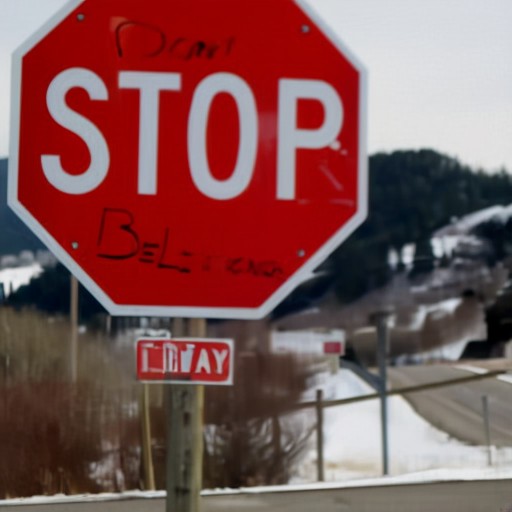} & 
       \includegraphics[width=0.22\columnwidth]{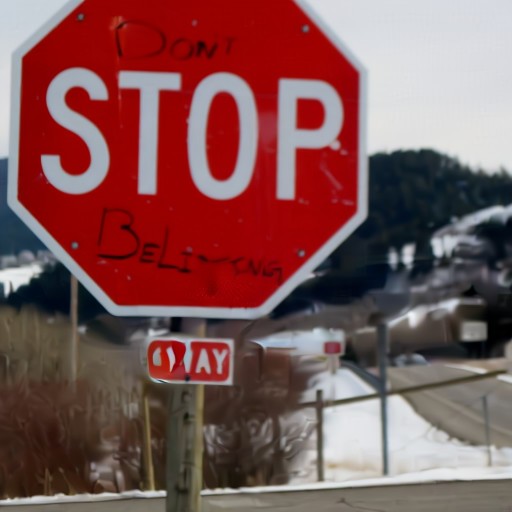}   \\
       \includegraphics[width=0.22\columnwidth]{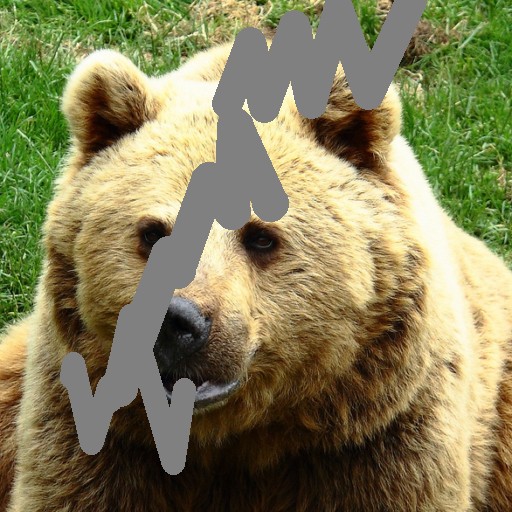} &  
       \includegraphics[width=0.22\columnwidth]{} &
       \includegraphics[width=0.22\columnwidth]{} & 
       \includegraphics[width=0.22\columnwidth]{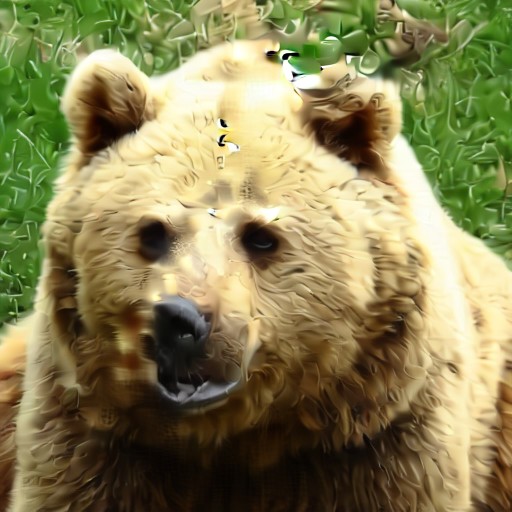}   \\
       \includegraphics[width=0.22\columnwidth]{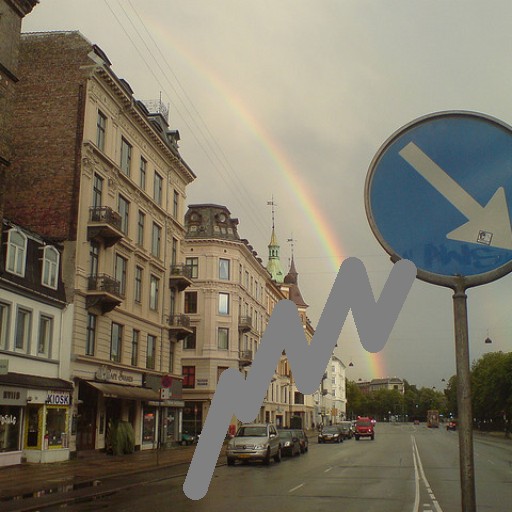} &  
       \includegraphics[width=0.22\columnwidth]{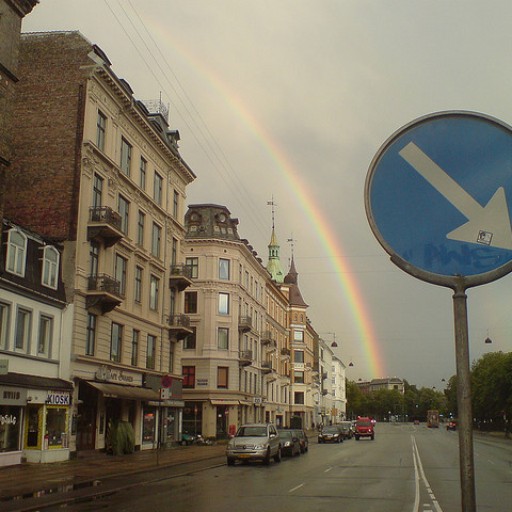} &
       \includegraphics[width=0.22\columnwidth]{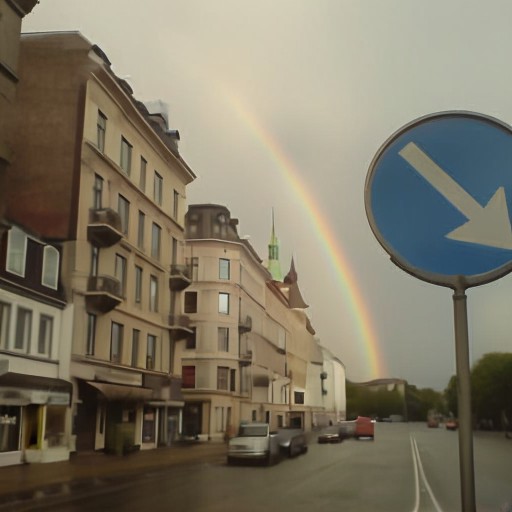} & 
       \includegraphics[width=0.22\columnwidth]{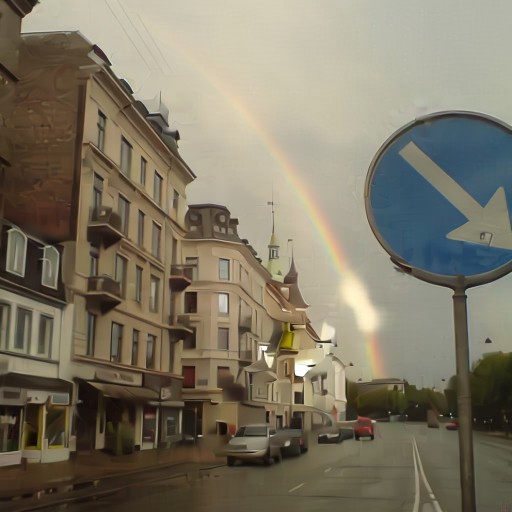}   \\
       \includegraphics[width=0.22\columnwidth]{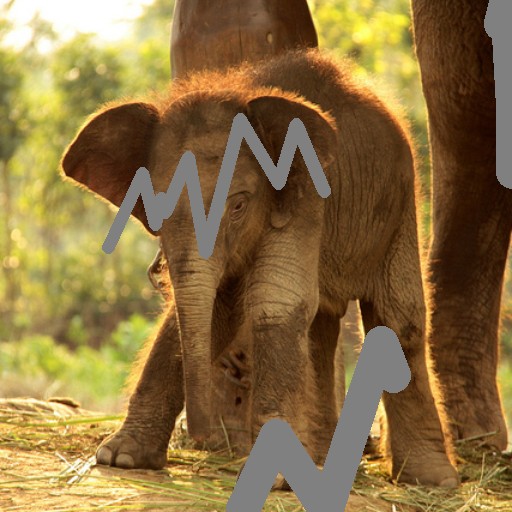} &  
       \includegraphics[width=0.22\columnwidth]{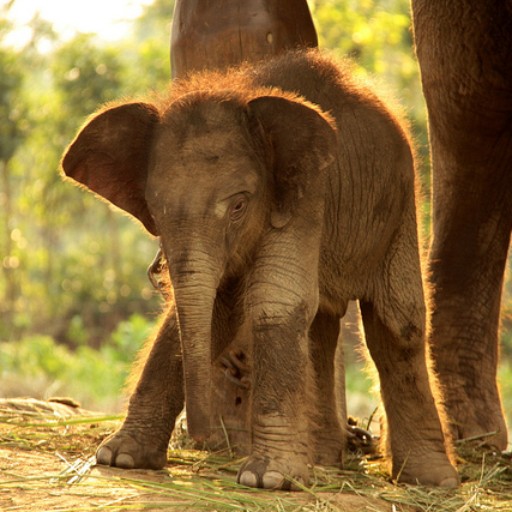} &
       \includegraphics[width=0.22\columnwidth]{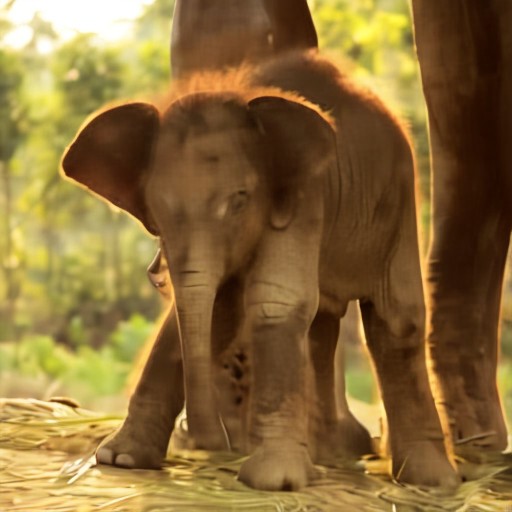} & 
       \includegraphics[width=0.22\columnwidth]{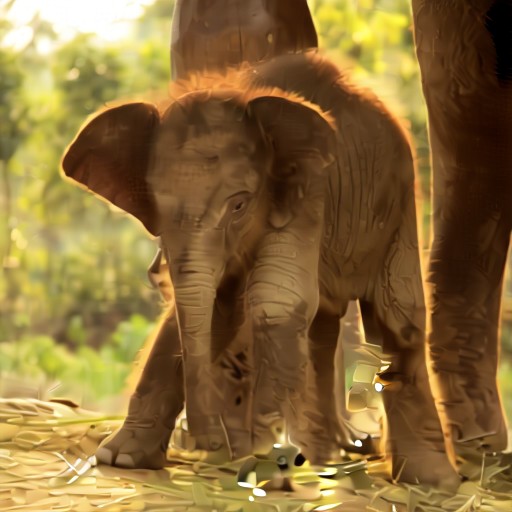}   \\
       \includegraphics[width=0.22\columnwidth]{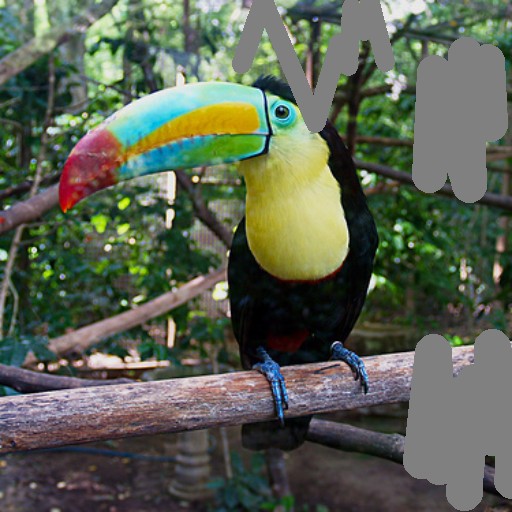} &  
       \includegraphics[width=0.22\columnwidth]{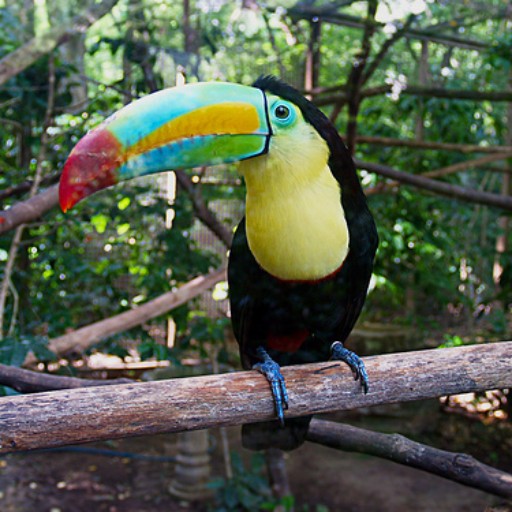} &
       \includegraphics[width=0.22\columnwidth]{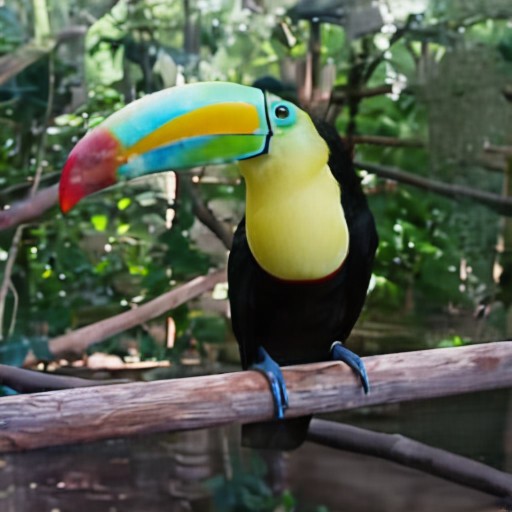} & 
       \includegraphics[width=0.22\columnwidth]{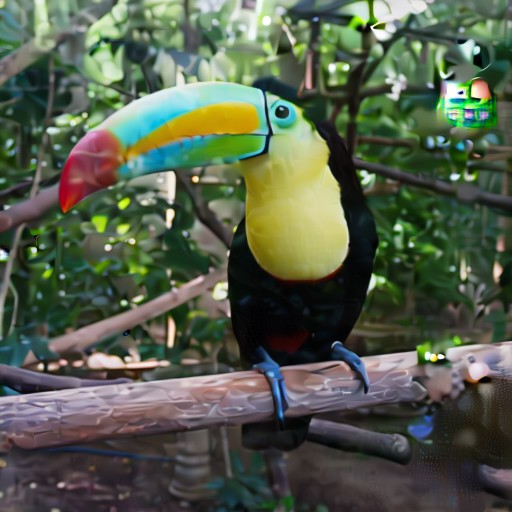}   \\
       
       \small Distorted & \small Ground Truth & \small Ours & \small RED-Diff \vspace{-10pt}
  \end{tabular}    
  \end{center}
  \caption{Qualitative comparison for free-form inpainting on the MS-COCO dataset using a T2I latent FM model. GT samples come from the MS-COCO validation set.}\label{afig:inp_latent_image}
\end{figure}

\end{document}